\documentclass{article}

\usepackage{microtype}
\usepackage{graphicx}
\usepackage{subcaption}
\usepackage{booktabs}
\usepackage{enumitem}
\setlist[itemize]{leftmargin=*, labelsep=.3em}

\PassOptionsToPackage{numbers, sort&compress}{natbib}

\usepackage[preprint]{arxiv/neurips_2026}
\usepackage{amsmath}
\usepackage{amssymb}
\usepackage{mathtools}
\usepackage{amsthm}

\usepackage[dvipsnames,svgnames,x11names]{xcolor}
\definecolor{lightblue}{HTML}{2970CC}
\definecolor{lightpurple}{HTML}{673147}
\definecolor{ForestGreen}{HTML}{FF5733}
\definecolor{myred}{HTML}{AA4A44}
\usepackage[colorlinks=true,linkcolor=lightblue,urlcolor=lightblue,citecolor=lightblue,breaklinks]{hyperref}
\usepackage[capitalize,noabbrev]{cleveref}

\theoremstyle{plain}
\newtheorem{theorem}{Theorem}[section]
\newtheorem{proposition}[theorem]{Proposition}
\newtheorem{lemma}[theorem]{Lemma}

\theoremstyle{definition}
\newtheorem{definition}[theorem]{Definition}

\theoremstyle{remark}
\newtheorem{remark}[theorem]{Remark}
\usepackage[ruled,noend]{algorithm2e}

\newcommand{\E}{\mathbb{E}}
\newcommand{\R}{\mathbb{R}}
\newcommand{\OmegaR}{\Omega_R}

\renewcommand{\ge}{\geqslant}
\renewcommand{\geq}{\geqslant}

\usepackage{tcolorbox}
\tcbuselibrary{skins,breakable}

\definecolor{keyblue}{RGB}{229,238,249}
\definecolor{keyblueborder}{RGB}{41,112,204}

\newtcolorbox{keyresult}{
	enhanced,
	colback=keyblue,
	colframe=keyblueborder,
	boxrule=1pt,
	arc=3pt,
	left=8pt,
	right=8pt,
	top=6pt,
	bottom=6pt,
	breakable
}

\newtcolorbox{highlight}{
	enhanced,
	colback=keyblue,
	colframe=keyblue,
	boxrule=0pt,
	arc=2pt,
	left=8pt,
	right=8pt,
	top=6pt,
	bottom=6pt,
	breakable
}

\usepackage{mdframed}
\newmdenv[
  linecolor=pink!60!purple,
  backgroundcolor=pink!30,
  linewidth=0pt,
  roundcorner=2pt,
  skipabove=3pt,
  skipbelow=1pt,
  innerleftmargin=3pt,
  innerrightmargin=3pt,
  innertopmargin=3pt,
  innerbottommargin=3pt
]{pinkbox}

\newmdenv[
  linecolor=blue!60!cyan,
  backgroundcolor=blue!10,
  linewidth=0pt,
  roundcorner=2pt,
  skipabove=3pt,
  skipbelow=1pt,
  innerleftmargin=3pt,
  innerrightmargin=3pt,
  innertopmargin=3pt,
  innerbottommargin=3pt
]{bluebox}

\tcbset{colback=blue!5!white, colframe=blue!40!black, boxrule=0.5pt, arc=3pt, left=2pt, right=2pt, top=3pt, bottom=3pt}
\newtcolorbox{propbox}{colback=blue!5!white, colframe=blue!40!black, boxrule=0.5pt, arc=3pt, left=1pt, right=1pt, top=2pt, bottom=2pt}

\usepackage[utf8]{inputenc} 
\usepackage[T1]{fontenc}    
\usepackage{url}            
\usepackage{booktabs}       
\usepackage{amsfonts}       
\usepackage{nicefrac}       
\usepackage{microtype}      
\usepackage{xcolor}         

\title{Reactive Flux Matching:\\[0.3em] \Large{Mechanism Discovery and Adaptive Sampling of Rare Events}}

\author{%
  Rishal Aggarwal \\
  CMU-Pitt Program in Computational Biology\\
  Dept. of Computational \& Systems Biology\\
  University of Pittsburgh\\
  \texttt{rishal.aggarwal@pitt.edu} \\
  \And
  David Ryan Koes \\
  CMU-Pitt Program in Computational Biology\\
  Dept. of Computational \& Systems Biology\\
  University of Pittsburgh\\
  \texttt{dkoes@pitt.edu} \\
  \AND
  Nicholas M. Boffi \\
  Machine Learning Department\\
  Dept. of Mathematical Sciences\\
  Carnegie Mellon University\\
  \texttt{nboffi@andrew.cmu.edu} \\
  \And
  Eric Vanden-Eijnden \\
  Courant Institute, New York University\\
  Machine Learning Lab\\
  Capital Fund Management\\
  \texttt{eric.vanden-eijnden@cfm.com} \\
}

\begin{document}

\maketitle

\begin{abstract}

Path sampling methods generate ensembles of reactive trajectories connecting metastable states, but extracting mechanistic insight from these data remains nontrivial. We introduce \emph{Flux Matching}, a framework that learns two complementary objects directly from reactive trajectory data: a \emph{current velocity} $u(z)$, whose streamlines trace the dominant reaction pathways, and a scalar \emph{potential} $h(z)$, obtained from a weighted Helmholtz--Hodge decomposition of the reactive current, that serves as a data-driven reaction coordinate. Both minimize quadratic functionals over the reactive path ensemble, analogous to the flow matching loss in generative modeling, and require no knowledge of the underlying dynamics or stationary distribution. Unlike committor-based methods, $u$ and $h$ remain well-defined under projection onto non-Markovian collective variables, and their level sets in turn provide adaptive interfaces for improved sampling with enhanced sampling methods. Flux Matching is validated through the generation of current velocity trajectories and rate constant calculations on molecular systems.
\end{abstract}

\section{Introduction}
\label{sec:intro}

Understanding the mechanisms of rare transitions between metastable states and extreme events is a central challenge across computational chemistry, biology, materials science, and climate science. Examples range from protein folding\cite{bowman2011taming,dill2012protein} and chemical reactions to \cite{gispen2023brute} and extreme weather phenomena such as heat waves \citep{meehl2004heat, stott2004human}, sudden stratospheric warmings \citep{baldwin2001stratospheric}, and El Ni\~no onset \citep{cai2014increasing}. These events are rare because the system must traverse regions of phase space that are seldom visited under typical dynamics: a molecular dynamics trajectory may run for microseconds while the folding event of interest occurs on millisecond timescales, and obtaining robust statistics on century-scale climate extremes requires ensembles of long simulations that are computationally prohibitive.

Path sampling methods address this timescale problem by focusing computational effort on the rare fluctuations that successfully connect a reactant region $A$ to a product region $B$. Transition path sampling (TPS) \citep{bolhuis2002transition}, transition interface sampling (TIS) \citep{van2005elaborating}, forward flux sampling (FFS) \citep{allen2009forward}, and weighted ensemble (WE) \citep{huber1996weighted, aristoff2018analysis} all generate ensembles of \emph{reactive trajectories}, paths that connect $A$ to $B$, providing direct access to the statistics of transition events.

However, extracting mechanistic insight from reactive trajectory data remains nontrivial. The central object of interest is traditionally the \emph{committor function} $q(x)$, the probability that a trajectory initiated at $x$ reaches $B$ before $A$ \citep{e2006towards, vanden2006transition, hummer2004transition, best2005reaction}. The committor is often regarded as the ideal reaction coordinate, but it is fundamentally tied to the Markov property of the underlying dynamics. When trajectories are projected onto collective variables, as is inevitable in high-dimensional systems, the projected dynamics are generally non-Markovian, and the committor of the full system cannot be expressed as a function of the reduced variables alone. Methods that learn a committor in the reduced space must therefore make uncontrolled approximations.

\begin{figure*}[t]
	\centering
	\includegraphics[width=\textwidth]{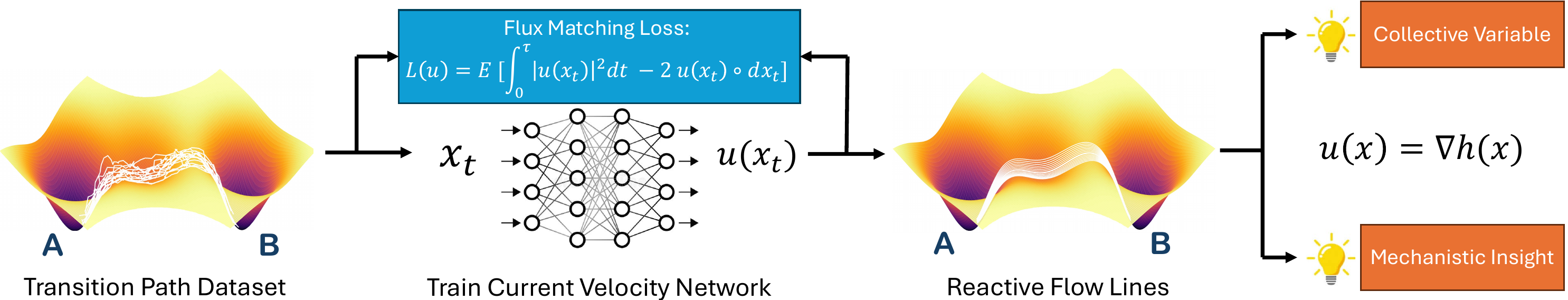}
	\caption{\textbf{Overview of Flux Matching.} From an ensemble of reactive trajectories, we learn the probability current and a scalar potential whose level sets define a data-driven reaction coordinate and milestones for refining the sampler.}
	\label{fig:overview}
\end{figure*}

In this work, we take a different approach. Rather than targeting the committor directly, we characterize the reactive path ensemble through two complementary objects, both estimated by regression from trajectory data alone.

The first is the \emph{current velocity} $u$, the ratio of the reactive current to the reactive density. It is the average instantaneous velocity of reactive trajectories, and its streamlines describe the deterministic skeleton of the transition mechanism from $A$ to $B$. The second is a scalar \emph{potential} $h$, obtained from a weighted Helmholtz--Hodge decomposition of the reactive current into an irrotational gradient component and a divergence-free remainder. The potential $h$ carries the net flux from $A$ to $B$ and serves as a learned reaction coordinate. For systems satisfying detailed balance, $h$ reduces to $\log[q/(1-q)]$ where $q$ is the forward committor, recovering the optimal committor-based coordinate without solving any boundary value problem; but the construction extends to non-Markovian settings where the committor itself is not well-defined. In particular, both $u$ and $h$ remain meaningful under projection onto collective variables, where they yield the exact marginal current and potential.

Both objects are unique minimizers of quadratic functionals over the reactive path ensemble (Theorem~\ref{thm:variational}), with empirical estimators built directly from trajectory data and requiring no knowledge of the underlying drift or stationary distribution. The loss for $u$ is structurally identical to the flow matching / stochastic interpolant loss \citep{lipman2022flow, liu2022flow, albergo2023stochastic}, with the reactive path ensemble in place of a coupling between distributions; the loss for $h$ is a Benamou--Brenier-type functional. We call this framework \emph{Flux Matching}. Beyond direct mechanistic analysis, the level sets of $h$ provide a principled scalar collective variable that can replace hand-chosen ones in adaptive samplers---interfaces for TIS or FFS, bin boundaries for WE, stopping surfaces for milestoning---enabling an iterative scheme in which improved sampling and improved estimation reinforce each other.

\textbf{Main contributions.}
\begin{itemize}
    \item \textit{Variational characterization of the reactive path ensemble.} We characterize the reactive path ensemble through a current velocity $u$ and a scalar potential $h$ obtained from a weighted Helmholtz--Hodge decomposition of the reactive current, each satisfying a quadratic variational principle (Theorem~\ref{thm:variational}) estimable by regression from trajectory data alone.

    \item \textit{Exact treatment of projected dynamics.} $u$ and $h$ remain well-defined under projection onto non-Markovian collective variables, in contrast to committor-based methods, which must assume the committor depends only on the observed variables.

    \item \textit{Data-driven reaction coordinates and sampling loop.} The streamlines of $u$ reveal the dominant transition pathways; the level sets of $h$ provide a principled reaction coordinate and adaptive milestones for TIS, FFS, WE, or milestoning, enabling iterative refinement of the sampler itself.

    \item \textit{Numerical validation.} We demonstrate the framework on the M\"uller--Brown potential under overdamped and underdamped dynamics, Alanine Dipeptide (ADP), and AIB9 molecular systems.
\end{itemize}
\subsection{Related Work}

\textbf{Path sampling methods.} Flux Matching is agnostic to how reactive trajectories are generated; suitable ensembles can be obtained from transition path sampling (TPS) \citep{bolhuis2002transition}, transition interface sampling (TIS) \citep{van2005elaborating}, forward flux sampling (FFS) \citep{allen2009forward}, weighted ensemble (WE) \citep{huber1996weighted}, or milestoning \citep{faradjian2004computing}. Adaptive multilevel splitting \citep{cerou2007adaptive, brehier2019new} and related rare event algorithms have been applied to climate and geophysical problems \citep{ragone2018computation, lucente2022coupling}; see \citet{lelievre2010free} and \citet{henin2022enhanced} for reviews.

\textbf{Flow matching and probability currents.} Flow matching \citep{lipman2023}, stochastic interpolants \citep{albergo2023building, albergo2023stochastic}, and rectified flows \citep{liu2023} learn velocity fields that transport a simple distribution to a complex target. Our variational principle (Theorem~\ref{thm:variational}) has the same quadratic structure, but the underlying measure is the reactive path ensemble rather than a coupling between distributions. Related ideas have been used to learn probability currents and entropy production rates in active matter systems \citep{boffi2024deep, boffi2025flocking, maddu2025learning}.

\textbf{Committor estimation.} Transition path theory (TPT) \citep{e2006towards, vanden2006transition, metzner2009transition} expresses the reactive density and current in terms of the committor function, and has been applied extensively to molecular simulations \citep{noe2009constructing, berezhkovskii2019single}. Likelihood-based methods \citep{peters2006obtaining, ma2005automatic} estimate the committor by fitting shooting outcomes, while neural network approaches \citep{li2019computing, khoo2019solving, rotskoff2022active, mitchell2024committor, lin2024deep} solve the committor PDE variationally. Recent work also leverages stochastic optimal(SOC) to estimate the committor\citep{du2026rare}. All of these rely on the committor being well-defined as a function of the observed state---a Markov property that fails under projection onto collective variables. Flux Matching sidesteps this issue by targeting the current directly, which remains well-defined regardless.

\section{Theoretical Framework}
%

\subsection{Reactive Trajectories}

We consider a system evolving stochastically on $\R^d$ in statistical steady state, not necessarily Markovian --- this generality covers coarse-grained or projected dynamics (Appendix~\ref{app:examples}). Transitions occur between two disjoint closed sets with smooth boundaries: the \emph{reactant} $A \subset \R^d$ and the \emph{product} $B \subset \R^d$, each possibly with multiple connected components. We denote by $\OmegaR = \R^d \setminus (A \cup B)$ the \emph{reactive region}.

\begin{pinkbox}
\begin{definition}[Reactive Trajectory and Path Ensemble]\label{def:reactive_ensemble}
A \emph{reactive trajectory} is a continuous path $\omega = (z_t)_{t \in [0, \tau(\omega)]}$ satisfying:
\begin{equation}
    z_0 \in \partial A, \qquad  z_\tau \in \partial B, \qquad z_t \in \OmegaR \quad \text{for all } t \in (0, \tau),
\end{equation}
where $\tau = \tau(\omega) > 0$ is the duration of the path, which varies from one trajectory to another. A \emph{reactive path ensemble} is a probability measure $\mathbb{P}$ on the space of such paths. We denote by $\E$ the expectation with respect to $\mathbb{P}$.
\end{definition}
\end{pinkbox}

The measure $\mathbb{P}$ encodes all statistical information about the reactive dynamics: correlations, memory effects, and the distribution of transition times $\tau$. We assume that $\mathbb{P}$-almost every path has finite quadratic variation (so Stratonovich integrals are well-defined), and that we have access to an empirical sample $\{(z^{(i)}_t)_{t \in [0, \tau_i]}\}_{i\in[N]}$.

\subsection{Reactive Density, Current, and Surface Densities}

The reactive path ensemble is characterized by four statistical quantities, defined through their duality pairings against test functions:


\newpage
\begin{pinkbox}
\begin{definition}[Reactive density, current, and surface densities]\label{def:rdc}
For any test functions $\phi: \R^d \to \R$ and $\psi: \R^d \to \R^d$,
\begin{align}
    \emph{reactive density } \rho_R: \OmegaR \to \R_{\geq 0}: \qquad & \E\left[\int_0^\tau \phi(z_t)\,dt\right] = \int_{\OmegaR} \phi(z)\,\rho_R(z)\,dz, \label{eq:density}\\
    \emph{reactive current } j_R: \OmegaR \to \R^d: \qquad & \E\left[\int_0^\tau \psi(z_t) \circ dz_t\right] = \int_{\OmegaR} \psi(z)\cdot j_R(z)\,dz, \label{eq:current}\\
    \emph{surface density on } \partial A,~\rho_{\partial A}: \qquad & \E[\phi(z_0)] = \int_{\partial A} \phi(z)\,\rho_{\partial A}(z)\,d\sigma_{\partial A}(z), \label{eq:surfdensA}\\
    \emph{surface density on } \partial B,~\rho_{\partial B}: \qquad & \E[\phi(z_\tau)] = \int_{\partial B} \phi(z)\,\rho_{\partial B}(z)\,d\sigma_{\partial B}(z), \label{eq:surfdensB}
\end{align}
where $\circ$ denotes the Stratonovich integral and $d\sigma_{\partial A}$, $d\sigma_{\partial B}$ the surface elements on $\partial A$, $\partial B$.
\end{definition}
\end{pinkbox}

Intuitively, $\rho_R$ records the time-averaged occupation of reactive trajectories --- so $\int_{\OmegaR} \rho_R\,dz = \E[\tau] \equiv T_R$, the mean transition time --- while $j_R$ records their average flux and $\rho_{\partial A}, \rho_{\partial B}$ record where they exit $A$ and enter $B$. The Stratonovich convention in~\eqref{eq:current} is natural because it respects the chain rule: for smooth paths, it reduces to $\int_0^\tau \psi(z_t) \cdot \dot z_t\,dt$. All four objects are estimable from a sample by empirical averages: e.g., $\frac{1}{N}\sum_i \int_0^{\tau_i} \phi(z^{(i)}_t)\,dt \to \int_{\OmegaR} \phi\,\rho_R\,dz$ as $N\to\infty$.

Since reactive trajectories enter $\OmegaR$ through $\partial A$ and exit through $\partial B$, the current is divergence-free in $\OmegaR$ with source and sink at its boundaries:

\begin{propbox}
\begin{lemma}\label{thm:divcondition}
The current $j_R$ satisfies
\begin{equation}
\begin{aligned}
    &\operatorname{div}\, j_R = 0 \quad && \text{in } \OmegaR,\\
    &\hat n_{\partial A} \cdot j_R = -\rho_{\partial A} \quad && \text{on } \partial A,\\
    &\hat n_{\partial B} \cdot j_R = \rho_{\partial B} \quad && \text{on } \partial B,
\end{aligned}
    \label{eq:div_free}
\end{equation}
where $\hat n_{\partial A}$ and $\hat n_{\partial B}$ denote the unit normal vectors pointing inward into $A$ and $B$, respectively.
\end{lemma}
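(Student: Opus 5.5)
The plan is to test the current against gradients. Take a smooth scalar test function $\phi$ on $\R^d$ and choose $\psi = \nabla\phi$ in the current identity~\eqref{eq:current}. Because the Stratonovich integral obeys the ordinary chain rule, for $\mathbb{P}$-almost every path (these have finite quadratic variation by assumption) we have
\begin{equation*}
\int_0^\tau \nabla\phi(z_t)\circ dz_t = \phi(z_\tau) - \phi(z_0).
\end{equation*}
Next, take expectations and use the surface density definitions~\eqref{eq:surfdensA} and~\eqref{eq:surfdensB}. This gives the weak identity
\begin{equation*}
\int_{\OmegaR} \nabla\phi\cdot j_R\,dz = \int_{\partial B}\phi\,\rho_{\partial B}\,d\sigma_{\partial B} - \int_{\partial A}\phi\,\rho_{\partial A}\,d\sigma_{\partial A},
\end{equation*}
valid for every smooth $\phi$. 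This identity is the whole content of the lemma; the rest is reading it off.

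Then read off the three conditions one at a time.
\begin{enumerate}
\item First take $\phi$ compactly supported inside $\OmegaR$. The right-hand side vanishes, so $\int_{\OmegaR}\nabla\phi\cdot j_R\,dz = 0$. This is exactly $\operatorname{div} j_R = 0$ in $\OmegaR$ in the distributional sense.
\item For general $\phi$, integrate the left-hand side by parts on $\OmegaR$, assuming enough regularity of $j_R$ up to the boundary. Since the interior divergence term is now zero, only boundary terms remain.
\item The outward normal of $\OmegaR$ on $\partial A$ points into $A$, which is $\hat n_{\partial A}$. On $\partial B$ it points into $B$, which is $\hat n_{\partial B}$. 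The left-hand side therefore becomes
\begin{equation*}
\int_{\partial A}\phi\,\hat n_{\partial A}\cdot j_R\,d\sigma_{\partial A} + \int_{\partial B}\phi\,\hat n_{\partial B}\cdot j_R\,d\sigma_{\partial B}.
\end{equation*}
\item Since $A$ and $B$ are disjoint closed sets, their boundaries are separated. We can therefore choose $\phi$ with arbitrary boundary trace supported on $\partial A$ only, or on $\partial B$ only. Matching the two sides then yields $\hat n_{\partial A}\cdot j_R = -\rho_{\partial A}$ and $\hat n_{\partial B}\cdot j_R=\rho_{\partial B}$.
\end{enumerate}

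The main obstacle is the justification of the boundary step, not the algebra. Decay at infinity and integrability must be handled:
\begin{itemize}
\item $\OmegaR$ may be unbounded, so I would restrict to $\phi$ with compact support, or with bounded gradient if $\E[\tau]<\infty$.
\item The test functions need not vanish on $\partial A\cup\partial B$. This is legitimate because~\eqref{eq:current} is stated for arbitrary test functions on $\R^d$.
\end{itemize}
If $j_R$ is not regular enough to take traces, I would simply interpret the boundary conditions weakly, through the displayed identity itself. That weak formulation is the precise meaning of~\eqref{eq:div_free}.

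A subtle point is that paths touch $\partial A$ and $\partial B$ only at their endpoints. Hence the time integral sees only $\OmegaR$ up to a null set, which is consistent with $j_R$ being supported in $\OmegaR$.
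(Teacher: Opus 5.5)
Your proof is correct and follows the paper's argument: test the current identity with $\psi=\nabla\phi$, apply the Stratonovich chain rule and the surface-density definitions, then integrate by parts over $\OmegaR$ and match terms. Your version is slightly more careful than the paper's. You first extract $\operatorname{div} j_R=0$ from test functions supported in $\OmegaR$, and then localize near $\partial A$ or $\partial B$, whereas the paper writes the boundary terms of the integration by parts directly in their target form. For that localization you only need local separation (each point of $\partial A$ has a neighborhood disjoint from $B$), not a positive distance between $\partial A$ and $\partial B$.
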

\end{propbox}

\begin{proof}
Evaluating \eqref{eq:current} with $\psi = \nabla \phi$, the left-hand side becomes
\begin{equation}
    \label{eq:s1}
    \begin{aligned}
    \E \left[ \int_0^\tau \nabla \phi(z_t) \circ dz_t \right] &= \E[\phi(z_\tau)] - \E[\phi(z_0)]= \int_{\partial B} \phi \, \rho_{\partial B} \, d\sigma_{\partial B} - \int_{\partial A} \phi \, \rho_{\partial A} \, d\sigma_{\partial A},
    \end{aligned}
\end{equation}
where we used~\eqref{eq:surfdensA} and~\eqref{eq:surfdensB}. The right-hand side, by integration by parts, is
\begin{equation}
    \label{eq:s2}
    \int_{\OmegaR} \nabla \phi \cdot j_R \, dz = - \int_{\OmegaR} \phi \, \operatorname{div}\, j_R \, dz + \int_{\partial B} \phi \, \rho_{\partial B} \, d\sigma_{\partial B} - \int_{\partial A} \phi \, \rho_{\partial A} \, d\sigma_{\partial A}.
\end{equation}
Equating \eqref{eq:s1} and \eqref{eq:s2} yields \eqref{eq:div_free}.
\end{proof}

\subsection{Current Velocity and Potential}\label{sec: current velocity}

Given the reactive density and current, we define the \emph{current velocity} as
\begin{equation}
    u(z) = \frac{j_R(z)}{\rho_R(z)}.
    \label{eq:current_velocity}
\end{equation}
This velocity field captures the average instantaneous velocity of reactive trajectories passing through~$z$. It allows us to compute the streamlines of the probability current by integrating the \emph{probability flow ODE}
\begin{equation}
    \dot{Z}_t = u(Z_t),
    \label{eq:flow_ode}
\end{equation}
either forward or backward in time. A natural choice of initial condition is to draw $Z_0$ from the reactive trajectory ensemble, i.e., as a snapshot $z_t$ for some $t \in [0, \tau]$.

We also define the \emph{potential} $h$ via the weighted Helmholtz--Hodge decomposition of $j_R$ relative to the density $\rho_R$:
\begin{equation}
    \label{eq:helmholtz}
    j_R = \underbrace{\rho_R\,D \nabla h}_{\text{irrotational part}} + \underbrace{j_R^\perp}_{\text{solenoidal part}},
\end{equation}
where $D(z)$ is a user-specified positive-definite symmetric tensor that can be chosen to reflect the local geometry of the dynamics; in the examples below we set $D = \operatorname{Id}$. In this decomposition, the irrotational part carries the entire net flux from $A$ to $B$: for any dividing surface $\Sigma \subset \OmegaR$ separating $A$ from $B$,
\[
  \int_\Sigma j_R\cdot\hat{n}\,d\sigma = \int_\Sigma \rho_R D \nabla h\cdot\hat{n}\,d\sigma = \kappa,
\]
where $\kappa = \int_{\partial A}\rho_{\partial A}\,d\sigma_{\partial A} = \int_{\partial B}\rho_{\partial B}\,d\sigma_{\partial B}$ is the total reactive flux. Equivalently, requiring the solenoidal part $j_R^\perp$ to have zero normal flux on $\partial A \cup \partial B$ uniquely determines the decomposition, and $\rho_R D \nabla h$ then satisfies the same boundary value problem~\eqref{eq:div_free} as $j_R$. The potential $h$ thus admits a Benamou--Brenier characterization as the unique (up to a constant) minimizer of $\int_{\OmegaR}|\nabla h|^2_D\,\rho_R\,dz$ (with $|v|^2_D = v^\top D v$) subject to these prescribed boundary fluxes.
The potential $h$ thus serves as a data-driven reaction coordinate: its level sets define natural milestones for adaptive samplers such as TIS, FFS, or milestoning. For Markovian dynamics satisfying detailed balance, $h$ reduces to $\log[q/(1-q)]$, with $q$ the committor (Section~\ref{sec:tpt}); but the construction extends to non-Markovian settings where the committor itself is not well-defined.

\subsection{Variational Characterization}
The current velocity and the potential admit the following variational characterizations:
\begin{propbox}
\begin{theorem}[Flux Matching Losses] \label{thm:variational}
Given any symmetric positive-definite tensor $D(z)$, the current velocity $u$ in \eqref{eq:current_velocity} is the unique minimizer of
\begin{equation}
    \mathcal{L}_u(u) = \E \left[ \int_0^\tau |u(z_t)|_{D^{-1}(z_t)}^2 \, dt - 2 \, u(z_t)^\top D^{-1}(z_t) \circ dz_t \right],
    \label{eq:variational}
\end{equation}
and the potential $h$ defined in \eqref{eq:helmholtz} is the unique (up to a constant) minimizer of
\begin{equation}
    \mathcal{L}_h(h) = \E \left[ \int_0^\tau |\nabla h(z_t)|_{D(z_t)}^2 \, dt + 2h(z_0) - 2 h(z_\tau)\right].
    \label{eq:variational:h}
\end{equation}
\end{theorem}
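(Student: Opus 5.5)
The plan is to convert each loss into an integral over $\OmegaR$ using the duality pairings of Definition~\ref{def:rdc}, and then complete the square in a weighted $L^2$ space.

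\textbf{Loss for $u$.} Apply \eqref{eq:density} with $\phi = |u|^2_{D^{-1}}$, and apply \eqref{eq:current} with $\psi = D^{-1}u$. This gives
\[
\mathcal{L}_u(u) = \int_{\OmegaR} \bigl(|u|^2_{D^{-1}}\rho_R - 2\,u^\top D^{-1} j_R\bigr)\,dz .
\]
Write $j_R = \rho_R u^*$ with $u^* = j_R/\rho_R$, and complete the square:
\[
\mathcal{L}_u(u) = \int_{\OmegaR} |u-u^*|^2_{D^{-1}}\,\rho_R\,dz - \int_{\OmegaR} |u^*|^2_{D^{-1}}\,\rho_R\,dz .
\]
The last term does not depend on $u$. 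Since $D^{-1}$ is positive definite, the minimizer is $u = u^*$, and it is unique $\rho_R$-almost everywhere, i.e. on the support of $\rho_R$. This is the natural sense of uniqueness here, and I would state it explicitly. We also need to assume $\int |u^*|^2_{D^{-1}}\rho_R < \infty$ so that the functional is finite at the minimizer.

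\textbf{Loss for $h$.} I first rewrite the boundary terms with \eqref{eq:surfdensA}--\eqref{eq:surfdensB}:
\[
\E[h(z_0) - h(z_\tau)] = \int_{\partial A} h\,\rho_{\partial A}\,d\sigma - \int_{\partial B} h\,\rho_{\partial B}\,d\sigma .
\]
Then I use \eqref{eq:s1} from the proof of Lemma~\ref{thm:divcondition} with $\phi = h$. It identifies this quantity as $-\int_{\OmegaR}\nabla h\cdot j_R\,dz$, which is valid for any smooth test function $h$. Hence
\[
\mathcal{L}_h(h) = \int_{\OmegaR}\bigl(|\nabla h|_D^2\rho_R - 2\nabla h\cdot j_R\bigr)\,dz .
\]
Next I insert the decomposition \eqref{eq:helmholtz}, $j_R = \rho_R D\nabla h^* + j_R^\perp$. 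The key step is to show that the cross term $\int_{\OmegaR} \nabla h\cdot j_R^\perp\,dz$ vanishes. Integrating by parts, it equals $-\int h\,\operatorname{div} j_R^\perp$ plus boundary terms involving $h\,\hat n\cdot j_R^\perp$. I then argue as follows.
\begin{itemize}
\item By construction $\rho_R D\nabla h^*$ satisfies the same problem \eqref{eq:div_free} as $j_R$.
\item Therefore $j_R^\perp$ is divergence-free in $\OmegaR$.
\item Its normal component on $\partial A\cup\partial B$ also vanishes.
\end{itemize}
So $j_R^\perp$ is orthogonal to all gradients, and
\[
\mathcal{L}_h(h) = \int_{\OmegaR}\bigl(|\nabla h|^2_D - 2\nabla h^\top D\nabla h^*\bigr)\rho_R\,dz = \int_{\OmegaR}|\nabla h - \nabla h^*|^2_D\,\rho_R\,dz - \text{const}.
\]
The minimizer is therefore $\nabla h = \nabla h^*$ $\rho_R$-a.e. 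On a connected reactive region, or on each connected component with $\rho_R>0$, this gives $h = h^*$ up to a constant.

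\textbf{Main obstacle.} I expect the hard part to be justifying the orthogonality of $j_R^\perp$ to gradients, i.e. the well-posedness of the weighted Helmholtz--Hodge decomposition itself. The cleanest route is to define $h^*$ directly as the solution of the weighted elliptic Neumann problem
\[
\operatorname{div}(\rho_R D\nabla h^*) = 0 \text{ in } \OmegaR, \qquad \hat n\cdot\rho_R D\nabla h^* = \hat n\cdot j_R \text{ on } \partial A\cup\partial B .
\]
Equivalently, $h^*$ is the projection of $j_R/\rho_R$ onto gradients in $L^2(\rho_R D^{-1})$. Lax--Milgram on the weighted Sobolev space $\{h : \int|\nabla h|_D^2\rho_R<\infty\}$ modulo constants gives existence and uniqueness. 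The compatibility condition $\int_{\partial A}\rho_{\partial A} = \int_{\partial B}\rho_{\partial B} = \kappa$ is supplied by Lemma~\ref{thm:divcondition}. With this definition, orthogonality of $j_R^\perp = j_R - \rho_R D\nabla h^*$ to every $\nabla h$ is precisely the weak form of the problem, so no separate boundary argument is needed. The remaining technical caveats are the regularity assumptions and the fact that uniqueness only holds where $\rho_R>0$.
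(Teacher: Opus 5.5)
Your proposal is correct and is essentially the paper's proof made explicit. The paper likewise rewrites both losses through the pairings of Definition~\ref{def:rdc} and reads off the minimizers, and your bulk form $\mathcal{L}_h(h)=\int_{\OmegaR}(|\nabla h|_D^2\rho_R-2\nabla h\cdot j_R)\,dz$, with the square completed using the orthogonality of $j_R^\perp$ to gradients, is the paper's own remark that $\mathcal{L}_h$ is $\mathcal{L}_u$ restricted to $u=D\nabla h$, i.e.\ the weak form of the Neumann problem its proof leaves implicit.

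Your finite-energy hypothesis $\int_{\OmegaR}|u^*|^2_{D^{-1}}\rho_R\,dz<\infty$ is genuinely needed; it is also what makes $h\mapsto\int_{\OmegaR}\nabla h\cdot j_R\,dz$ bounded in your Lax--Milgram step, and that boundedness, not compatibility (automatic in your bulk form), is the real requirement there. Be aware, though, that it fails in the paper's own detailed-balance setting: the coarea formula along level sets of $q$ bounds this integral below by a positive multiple of $\int_0^1 ds/(s(1-s))=\infty$, so $\mathcal{L}_u$ is unbounded below and the theorem is only formal there, a limitation the paper's two-line proof shares (cf.\ its remark that $h\to\pm\infty$ on $\partial A\cup\partial B$).
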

\end{propbox}
 
\begin{proof}
By definition of the probability density and current of reactive trajectories, the objective \eqref{eq:variational} can be written as
\[
    \mathcal{L}_u(u) = \int_{\OmegaR} \left[ |u|_{D^{-1}}^2 \rho_R - 2\, u^\top D^{-1} j_R \right] dz,
\]
whose unique minimizer is~\eqref{eq:current_velocity}. Similarly, \eqref{eq:variational:h} reads
\[
    \mathcal{L}_h(h) = \int_{\Omega_R}|\nabla h|_D^2 \, \rho_R \, dz + 2\int_{\partial A} h \, \rho_{\partial A} \, d\sigma_{\partial A} - 2\int_{\partial B} h \, \rho_{\partial B} \, d\sigma_{\partial B},
\]
whose unique minimizer is the potential $h$ from~\eqref{eq:helmholtz}.
\end{proof}

The loss~\eqref{eq:variational} is a $D^{-1}$-weighted version of the flow matching / stochastic interpolants objective \citep{albergo2023building,albergo2023stochastic}, using reactive trajectories as the interpolants; a related formulation was used in \citep{boffi2025flocking} to characterize probability currents and entropy production rates in nonequilibrium systems. The two losses in Theorem~\ref{thm:variational} are intimately connected: $\mathcal{L}_h$ is the restriction of $\mathcal{L}_u$ to gradient fields. Indeed, substituting $u = D\nabla h$ in \eqref{eq:variational}, the weighted norm becomes $|D\nabla h|^2_{D^{-1}} = |\nabla h|^2_D$, and the Stratonovich integrand reduces to $(D\nabla h)^\top D^{-1}\circ dz_t = \nabla h^\top \circ dz_t = dh(z_t)$, whose integral telescopes to $h(z_\tau) - h(z_0)$; this yields \eqref{eq:variational:h} exactly.

\subsection{Connection to Transition Path Theory}\label{sec:tpt}

When the dynamics is Markovian, our framework recovers the objects from transition path theory (TPT). Consider an SDE
\begin{equation}\label{eq:sde}
    dx_t = b(x_t)\,dt + \sqrt{2}\,\sigma(x_t)\,dW_t,
\end{equation}
with stationary density $\rho$ and diffusion tensor $D = \sigma\sigma^\top$, and let $q(x)$ and $\tilde q(x)$ be the forward and backward committors --- the probabilities of reaching $B$ before $A$, and of having come from $A$ rather than $B$, respectively. As shown in Appendix~\ref{app:tpt},
\begin{equation}\label{eq:rhoj_tpt}
    \rho_R = \rho\, q\,\tilde q, \qquad j_R = q\tilde q\, j + \rho\bigl[\tilde q\, D\nabla q - q\,D\nabla\tilde q\bigr],
\end{equation}
where $j = b\rho - \operatorname{div}(D\rho)$ is the stationary current of~\eqref{eq:sde}. The current velocity follows as
\begin{equation}\label{eq:u_tpt}
    u = b - \operatorname{div} D - D\nabla\log\rho + D\nabla\log q - D\nabla\log\tilde q.
\end{equation}
Under detailed balance, $j = 0$ and $\tilde q = 1 - q$, so~\eqref{eq:u_tpt} simplifies to $u = D\nabla\log[q/(1-q)]$; using the diffusion tensor $D$ from~\eqref{eq:sde} in~\eqref{eq:variational:h} then gives $h = \log[q/(1-q)]$, the optimal committor-based reaction coordinate. Concrete examples of reactive path ensembles (including overdamped and underdamped Langevin dynamics, and the non-Markovian case of position-only observation) are given in Appendix~\ref{app:examples}.

\subsection{Non-Markovian Projection}
\label{sec:nonmarkov}

When a system is observed through a projection $z_t = \phi(x_t)$ for some $\phi: \R^d \to \R^n$ with $n < d$, the dynamics is generally non-Markovian and the committor cannot be expressed as a function of the reduced variables alone. Nevertheless, the current velocity remains well-defined.

\begin{propbox}
\begin{theorem}\label{thm:projected_current}
Let $z_t = \phi(x_t)$ where $\phi: \R^d \to \R^n$ is $C^2$ with full rank Jacobian and $x_t$ solves the SDE~\eqref{eq:sde}. The current velocity in the reduced space is
\begin{equation}
\label{eq:u_projected}
    u(z) = \E\big[ \left(b(x) + 2D(x)\nabla\log q(x) \right) \cdot \nabla\phi(x)  + D(x) : \nabla\nabla\phi(x) \,\big|\, \phi(x) = z \big],
\end{equation}
where the expectation is over~$\rho_R$ conditional on~$\phi(x) = z$.
\end{theorem}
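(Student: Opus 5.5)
Our plan is to compute the reduced density and current through the duality pairings of Definition~\ref{def:rdc}, with test functions of the form $\psi\circ\phi$, and then read off $u = j_R^z/\rho_R^z$.

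\textbf{Step 1: reduced density.} Take a test function $\varphi:\R^n\to\R$. Then
\[
\E\Big[\int_0^\tau \varphi(z_t)\,dt\Big]=\E\Big[\int_0^\tau \varphi(\phi(x_t))\,dt\Big]=\int \varphi(\phi(x))\,\rho_R(x)\,dx .
\]
So the reduced density $\rho_R^z$ is the pushforward of the full-space reactive density $\rho_R=\rho q\tilde q$ under $\phi$. Conditional expectations given $\phi(x)=z$ are taken against this disintegration.

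\textbf{Step 2: reduced current.} Take $\psi:\R^n\to\R^n$. By the Stratonovich chain rule,
\[
dz_t = \nabla\phi(x_t)^\top\circ dx_t ,
\]
so
\[
\E\Big[\int_0^\tau\psi(z_t)\cdot\circ dz_t\Big]=\E\Big[\int_0^\tau \big(\nabla\phi\,\psi(\phi)\big)(x_t)\circ dx_t\Big]=\int \big(\nabla\phi\,\psi\circ\phi\big)\cdot j_R\,dx ,
\]
where $j_R$ is the full-space current given in~\eqref{eq:rhoj_tpt}. Dividing by $\rho_R$ pointwise, this becomes $\int \psi(\phi(x))\cdot\big(\nabla\phi^\top u_x\big)(x)\,\rho_R(x)\,dx$, with $u_x$ the full current velocity~\eqref{eq:u_tpt}. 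Hence
\[
u(z)=\E\big[\nabla\phi^\top u_x \mid \phi(x)=z\big].
\]
This already gives a valid formula. The remaining task is to rewrite it in the claimed form, which contains no $\rho$ and no $\tilde q$.

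\textbf{Step 3: removing $\rho$ and $\tilde q$.} This is the main obstacle. Start from~\eqref{eq:u_tpt}:
\[
\rho_R u_x=\rho q\tilde q\,(b-\operatorname{div}D)-q\tilde q\,D\nabla\rho+\rho\tilde q\,D\nabla q-\rho q\,D\nabla\tilde q .
\]
Write
\[
-q\tilde q\,D\nabla\rho-\rho q\,D\nabla\tilde q = -D\nabla(\rho q\tilde q)+\rho\tilde q\,D\nabla q .
\]
This gives
\[
j_R=\rho_R\,(b-\operatorname{div}D+2D\nabla\log q)-D\nabla\rho_R .
\]
In the pairing $\int (\nabla\phi\,\psi\circ\phi)\cdot j_R\,dx$, integrate the term $-D\nabla\rho_R$ by parts. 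The result is
\[
\int \rho_R\,\operatorname{div}\!\big(D\nabla\phi\,\psi\circ\phi\big)\,dx .
\]
Expanding the divergence gives three contributions:
\begin{itemize}
\item $(\operatorname{div}D)\cdot\nabla\phi\,\psi$, which cancels the $-\operatorname{div}D$ term;
\item $D:\nabla\nabla\phi\,\psi$, which is the Itô correction term in the claim;
\item a term $\nabla\phi^\top D\nabla\phi:\nabla_z\psi$, involving derivatives of $\psi$.
\end{itemize}

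The third term is the delicate point. Re-expressed in $z$, it contributes $-\operatorname{div}_z\big(\rho_R^z\,\E[\nabla\phi^\top D\nabla\phi\mid z]\big)$ to the reduced current. It is a gradient-of-density term, so it does not disappear automatically. I would handle it in one of two ways, and I would try them in this order:
\begin{enumerate}
\item[(a)] Check whether the claimed formula implicitly treats this term as absorbed. The Itô form $b\cdot\nabla\phi+D:\nabla\nabla\phi$ is the generator applied to $\phi$, and $2D\nabla\log q\cdot\nabla\phi$ is the Doob $h$-transform drift of forward reactive paths. Together these suggest the statement is the conditional mean of the Itô drift of $z_t$ under the $q$-conditioned process, with the remaining divergence term carried by the Stratonovich-versus-Itô difference in the reduced space.
\item[(b)] Otherwise, verify directly, via the boundary terms arising from $q=0$ on $\partial A$ and $\tilde q=0$ on $\partial B$ in the integration by parts, that the statement holds in the sense of reactive currents. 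If it does not, the statement needs an explicit extra $-\rho_R^{z,-1}\operatorname{div}_z(\cdots)$ term.
\end{enumerate}
The boundary terms themselves vanish because $\rho_R=0$ on $\partial A\cup\partial B$, since $q\tilde q$ vanishes there.

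Once this term is settled, dividing the pairing by $\rho_R^z$ yields~\eqref{eq:u_projected}.
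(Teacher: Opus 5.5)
You have a genuine gap, but it sits in the statement, not in your computation. Your Steps 1--3 are correct, and the third term you isolate is the crux. It cannot be ``settled'' in favour of \eqref{eq:u_projected}, because the statement is false as written, and neither of your escape routes works. In (a) you correctly recognise that the right-hand side of \eqref{eq:u_projected} is the conditional mean of the It\^o drift $\tilde b=\nabla\phi^\top(b+2D\nabla\log q)+D:\nabla\nabla\phi$ of $z_t$ under the Doob-transformed dynamics. But Definition~\ref{def:rdc} uses the Stratonovich integral, so the It\^o--Stratonovich difference is exactly what generates the term $\int\rho_R\,(\nabla\phi^\top D\nabla\phi):\nabla_z\psi(\phi)\,dx$; it is part of $j_R^z$ and cannot also absorb it. In (b), you have already shown the boundary contributions vanish ($\rho_R=0$ on $\partial A\cup\partial B$), and boundary terms cannot cancel a bulk term anyway. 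What your computation actually proves is
\[
u(z)=\E\big[\nabla\phi^\top u_x\,\big|\,\phi(x)=z\big]=\E\big[\tilde b(x)\,\big|\,\phi(x)=z\big]-\frac{1}{\rho_R^z(z)}\operatorname{div}_z\big(\rho_R^z D_z\big)(z),
\]
where $D_z(z)=\E[\nabla\phi^\top D\nabla\phi\mid\phi(x)=z]$. The case $\phi=\mathrm{id}$ already exposes the discrepancy. There \eqref{eq:u_projected} gives $b+2D\nabla\log q$, whereas \eqref{eq:u_tpt} rearranges to $b+2D\nabla\log q-\rho_R^{-1}\operatorname{div}(D\rho_R)$. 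Concretely, take one-dimensional overdamped Langevin dynamics with $U$ constant on $\OmegaR=(0,1)$, $A=(-\infty,0]$, $B=[1,\infty)$. If you want $n<d$, append a decoupled Ornstein--Uhlenbeck coordinate and project it out. Then $q=x$ and $\rho_R\propto x(1-x)$. The true current velocity $\beta^{-1}/(x(1-x))$ makes $\rho_R u$ constant, as Lemma~\ref{thm:divcondition} requires. The stated formula gives $2\beta^{-1}/x$, hence $\rho_R u\propto 1-x$, which is not divergence-free.

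The paper's proof reaches the same intermediate identity by a different route. It applies It\^o's lemma to $\phi(x^R_t)$ for the Doob-transformed SDE and then converts It\^o to Stratonovich, instead of your integration by parts against the TPT expression for $j_R$. It then asserts that the $\nabla\psi$ term ``cancels between the Stratonovich correction and the projected diffusion.'' There is nothing for it to cancel against. Definition~\ref{def:rdc} pairs $j_R^z$ with $\psi$ alone, so the only legitimate move is to integrate that term by parts in $z$, which produces exactly the $-\operatorname{div}_z(\rho_R^z D_z)$ correction you found. The boundary terms in $z$ vanish because $\rho_R^z$ does on $\partial A\cup\partial B$. So the gap you flagged is real and is shared by the paper's argument, and the paper's claim that $\phi=\mathrm{id}$ recovers \eqref{eq:u_tpt} is also false. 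The right way to finish is to commit to the corrected formula above rather than look for a mechanism that makes the term disappear. Your Step~2 expression $\E[\nabla\phi^\top u_x\mid\phi(x)=z]$ is already a clean, correct statement of the projected current velocity. The stated formula would hold only if the current were defined with a forward It\^o integral, but then Lemma~\ref{thm:divcondition} would fail.
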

\end{propbox}

The proof, given in Appendix~\ref{app:proof}, follows from applying It\^o's lemma to $z_t = \phi(x_t^R)$, where $x_t^R$ evolves according to the Doob-transformed dynamics
\begin{equation}\label{eq:reactive_sde}
    dx_t^R = \left[b(x_t^R) + 2D(x_t^R)\nabla\log q(x_t^R)\right] dt + \sqrt{2}\,\sigma(x_t^R) \, dW_t,
\end{equation}
which governs trajectories conditioned on being reactive. When $\phi = \mathrm{id}$, this recovers~\eqref{eq:u_tpt}. For general $\phi$, expression~\eqref{eq:u_projected} does not require solving for the committors explicitly: it is the conditional expectation of $\dot z_t$ along reactive trajectories, which Flux Matching learns directly from data.

\section{Implementation}
\label{sec:methods}

We minimize the losses of Theorem~\ref{thm:variational} by stochastic gradient descent on minibatches of reactive trajectories; the procedure is summarized in Algorithm~\ref{alg:flux_matching}. Training $h_\phi$ requires differentiating through the inner gradient $\nabla h_\phi(z_k)$ that appears in the loss and therefore relies on double backpropagation, which is supported by modern autograd frameworks.

\begin{algorithm}[t]
\caption{Flux Matching for the current velocity $u$ and/or the potential $h$}
\label{alg:flux_matching}
\DontPrintSemicolon
\KwIn{Reactive trajectories $\{(z^{(m)}_t)_{t \in [0,\tau_m]}\}_{m=1}^M$, time lag $\Delta t$, learning rate $\eta$, neural network $u_\theta$ with initial parameters $\theta_0$ and/or neural network $h_\phi$ with initial parameters $\phi_0$, diffusion tensor $D(z)$ (e.g.\ $D = \operatorname{Id}$).}
\For{$\textrm{iteration} = 1, 2, \ldots$}{
    Sample minibatch $\mathcal{B} \subset \{1, \ldots, M\}$\;
    $\mathcal{K} \leftarrow \{k \in \mathcal{B} : \tau_k > 2\Delta t\}$\;
    \For{$k \in \mathcal{K}$}{
        Sample $t_k \sim \mathcal{U}([\Delta t,\, \tau_k - \Delta t])$\;
        Set $z_k = z^{(k)}_{t_k}$,\ \ $\Delta z_k = \tfrac{1}{2}\bigl(z^{(k)}_{t_k+\Delta t} - z^{(k)}_{t_k-\Delta t}\bigr)$,\ \ $D_k = D(z_k)$\;
    }
    $\displaystyle L_u = \frac{1}{|\mathcal{K}|}\sum_{k \in \mathcal{K}}\Bigl(\bigl|u_\theta(z_k)\bigr|_{D_k^{-1}}^2\,\Delta t \;-\; 2\,u_\theta(z_k)^\top D_k^{-1}\,\Delta z_k\Bigr)$\quad and/or\;
    $\displaystyle L_h = \frac{1}{|\mathcal{K}|}\sum_{k \in \mathcal{K}}\Bigl(\bigl|\nabla h_\phi(z_k)\bigr|_{D_k}^2\,(\tau_k - 2\Delta t)\Bigr) + \mathcal{L}_{\mathrm{CE}}(h_\phi)$\;
    $\theta \leftarrow \theta - \eta\,\nabla_\theta L_u$\quad and/or\quad $\phi \leftarrow \phi - \eta\,\nabla_\phi L_h$\;
}
\Return $u_\theta$ and/or $h_\phi$\;
\end{algorithm}

\subsection{The time lag $\Delta t$}

The time lag $\Delta t$ controls a bias--variance tradeoff in the Stratonovich approximation. Small $\Delta t$ gives a more accurate increment but yields noisier displacements $\Delta z_k$, since the diffusive component dominates on short timescales; large $\Delta t$ reduces this noise but introduces bias when $u$ varies on scales below $\Delta t$. In practice, $\Delta t$ should be small compared to the timescale on which $u$ varies, yet large enough that displacements remain well-resolved relative to noise. The centered form $\Delta z_k = \tfrac{1}{2}(z^{(k)}_{t_k+\Delta t} - z^{(k)}_{t_k-\Delta t})$ approximates the Stratonovich integral to second order. One can also sample $\Delta t$ randomly within a range or anneal it during training---starting larger for stability and decreasing for accuracy.

\subsection{Boundary regularization for the potential}

The bulk term $|\nabla h|^2_D$ in $\mathcal{L}_h$ is well-behaved, but the boundary terms $2h(z_0) - 2h(z_\tau)$ have constant gradients that push $h$ toward $-\infty$ on $\partial A$ and $+\infty$ on $\partial B$. This is consistent with the theoretical limit $h = \log[q/(1-q)]$ under detailed balance but is a recipe for exploding gradients in practice. We therefore replace these boundary terms by a bounded logistic surrogate, the cross-entropy loss
\begin{equation}
    \mathcal{L}_{\mathrm{CE}}(h) = -\frac{1}{|\mathcal{K}|}\sum_{k \in \mathcal{K}}\Bigl[\log\sigma\bigl(-h(z^{(k)}_0)\bigr) + \log\sigma\bigl(h(z^{(k)}_{\tau_k})\bigr)\Bigr],
    \label{eq:cross_entropy}
\end{equation}
where $\sigma$ denotes the sigmoid. The minimization direction is the same as the original boundary penalty, with $h(z_0)$ driven low and $h(z_\tau)$ driven high, but the gradients saturate as $|h|$ grows, preventing runaway updates. The loss minimized in practice is the bulk term plus $\mathcal{L}_{\mathrm{CE}}$, as shown in Algorithm~\ref{alg:flux_matching}.

\subsection{Neural network parameterization}

We parameterize the velocity field and the potential as neural networks $u_\theta: \R^n \to \R^n$ and $h_\phi: \R^n \to \R$. For molecular systems, the inputs are either the full 3D atomic coordinates or internal dihedral angles; in the Cartesian setting, translational and rotational symmetries are handled via mean-centering and rotational data augmentation. Full architectural details are deferred to \cref{app: model archs}.

\begin{figure}[t]
    \centering
	\begin{subfigure}{0.23\textwidth}
		\includegraphics[width=\linewidth]{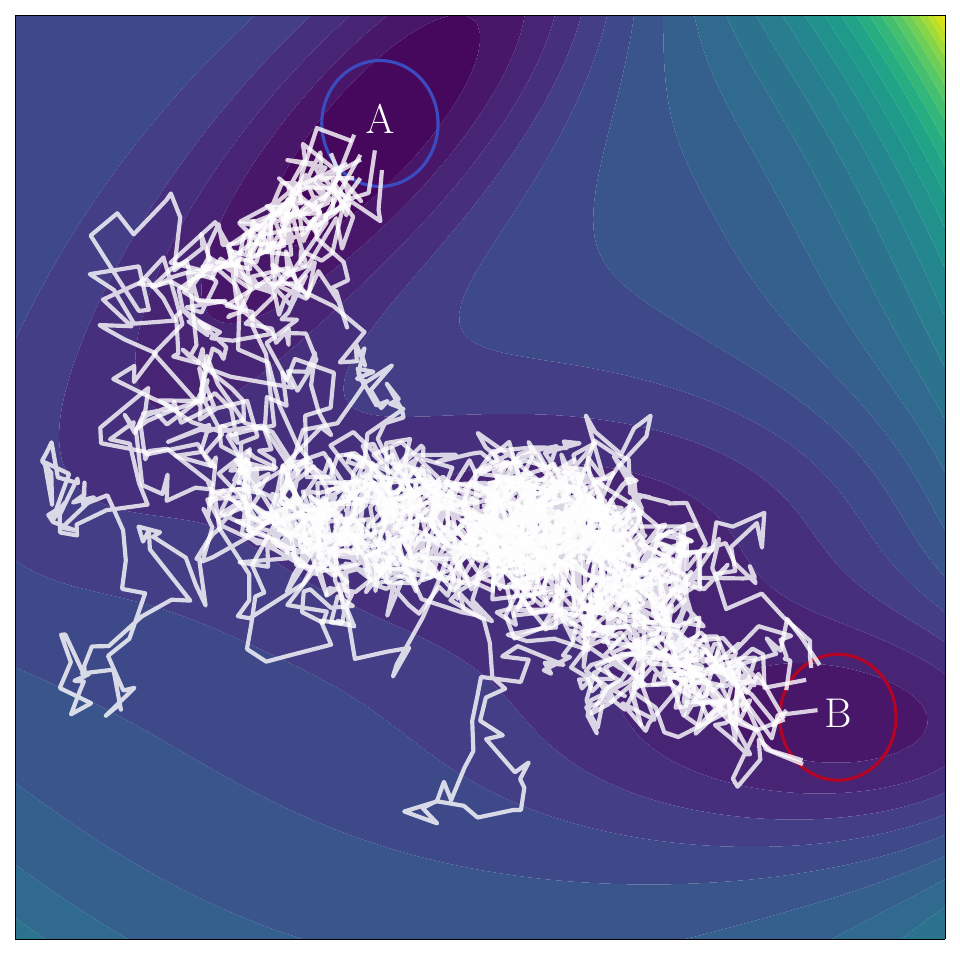}
	\end{subfigure}
    \begin{subfigure}{0.23\textwidth}
		\includegraphics[width=\linewidth]{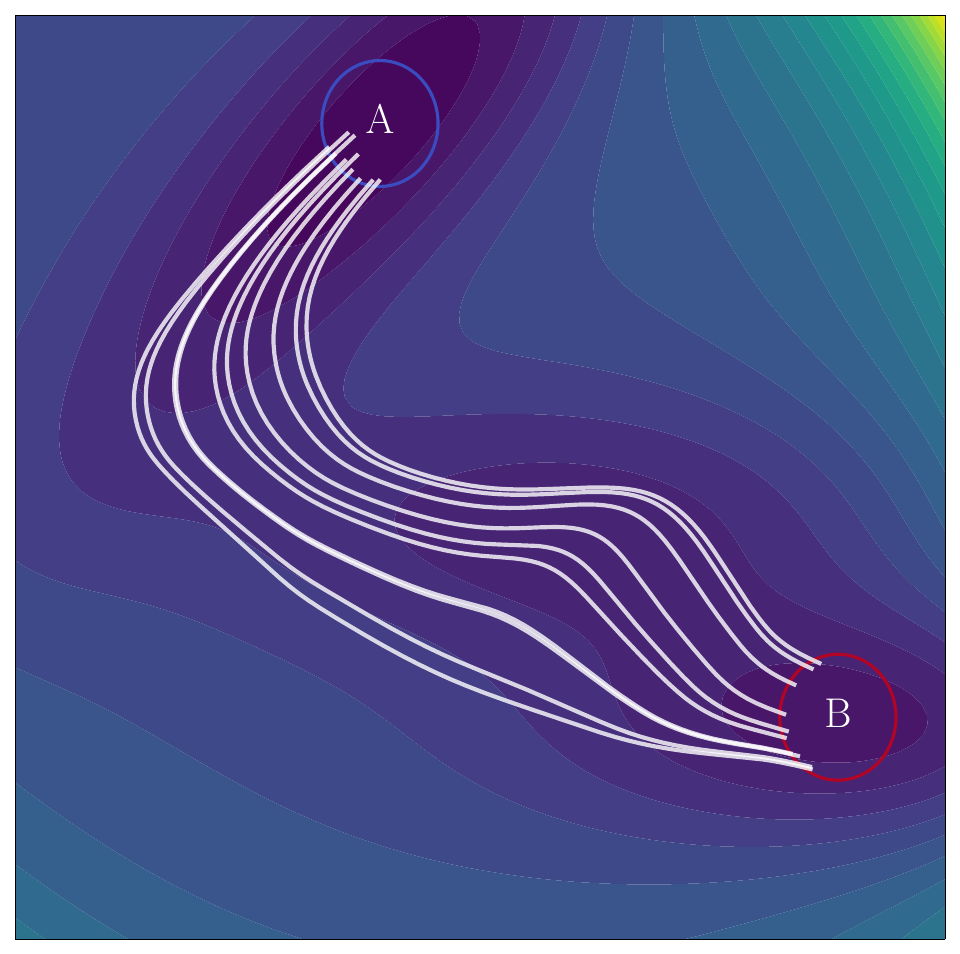}
	\end{subfigure}
    \begin{subfigure}{0.23\textwidth}
		\includegraphics[width=\linewidth]{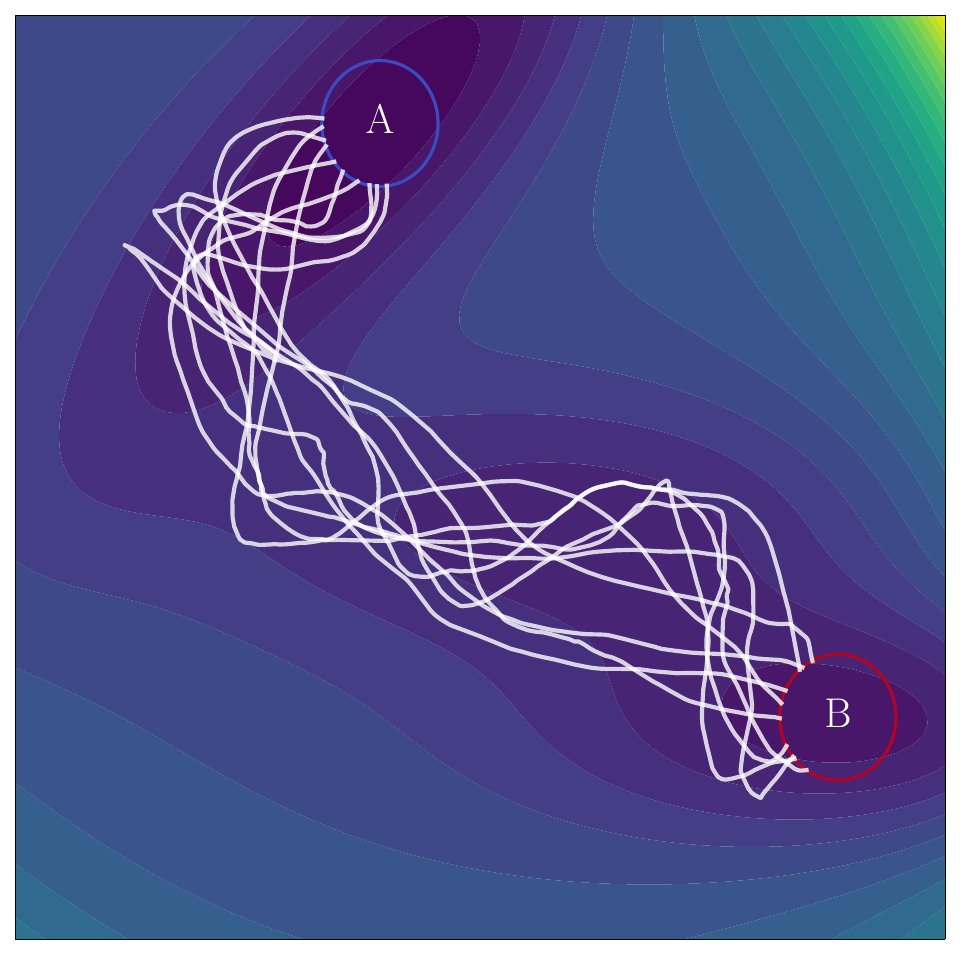}
	\end{subfigure}
    \begin{subfigure}{0.23\textwidth}
		\includegraphics[width=\linewidth]{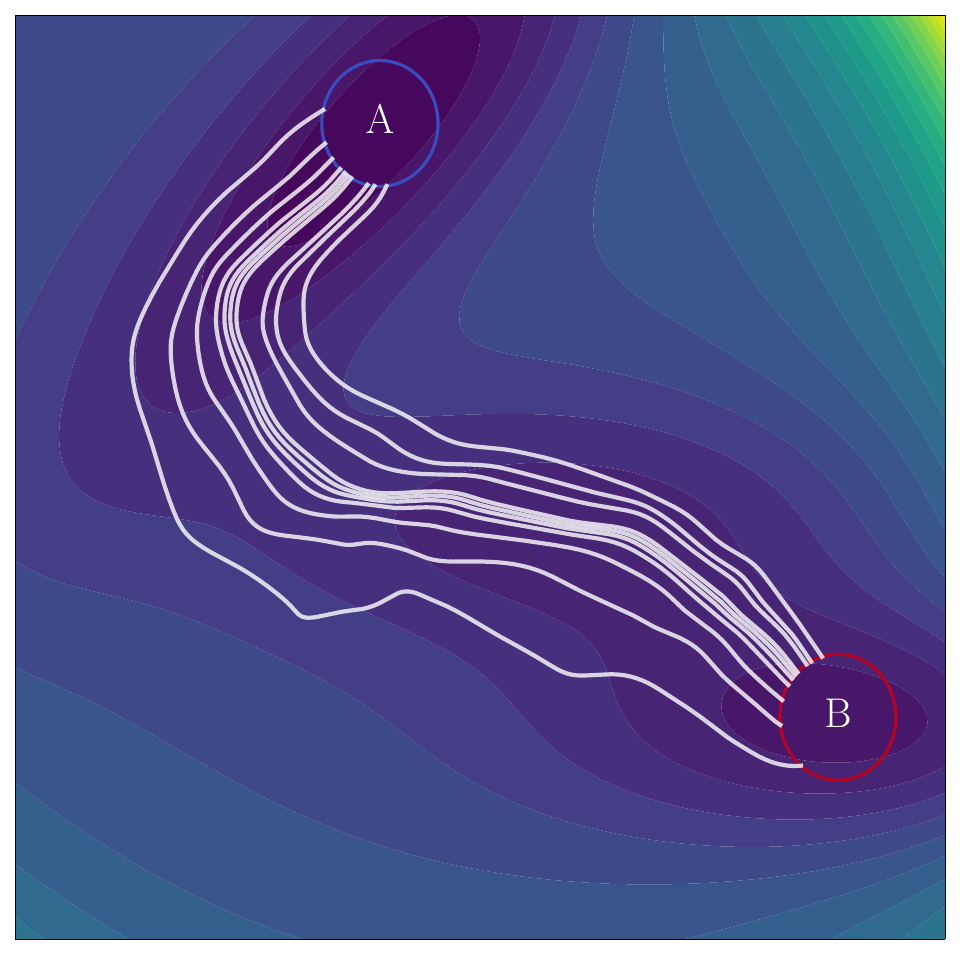}
	\end{subfigure}
	\caption{\textbf{Transition path and current trajectories on the Müller-Brown potential.} Reactive trajectories and learned current-velocity streamlines for overdamped (first pair) and underdamped (second pair) dynamics. The streamlines smoothly track the reactive channel in both regimes; underdamped paths show greater spread due to inertia.}
	\label{fig:mueller-brown}
\end{figure}



\section{Numerical Experiments}
\label{sec:experiments}

Flux Matching is investigated in several different settings. First, we demonstrate the effectiveness of Flux Matching on the toy 2D Müller-Brown (MB) system with both overdamped (first-order) and underdamped (second-order) dynamics and is shown in Figure~\ref{fig:mueller-brown}. We further conduct experiments on the higher-dimensional Alanine Dipeptide (ADP) and AIB9 molecular systems.


\begin{figure}[t]
    \centering
	\begin{subfigure}{0.29\textwidth}
		\includegraphics[width=\linewidth]{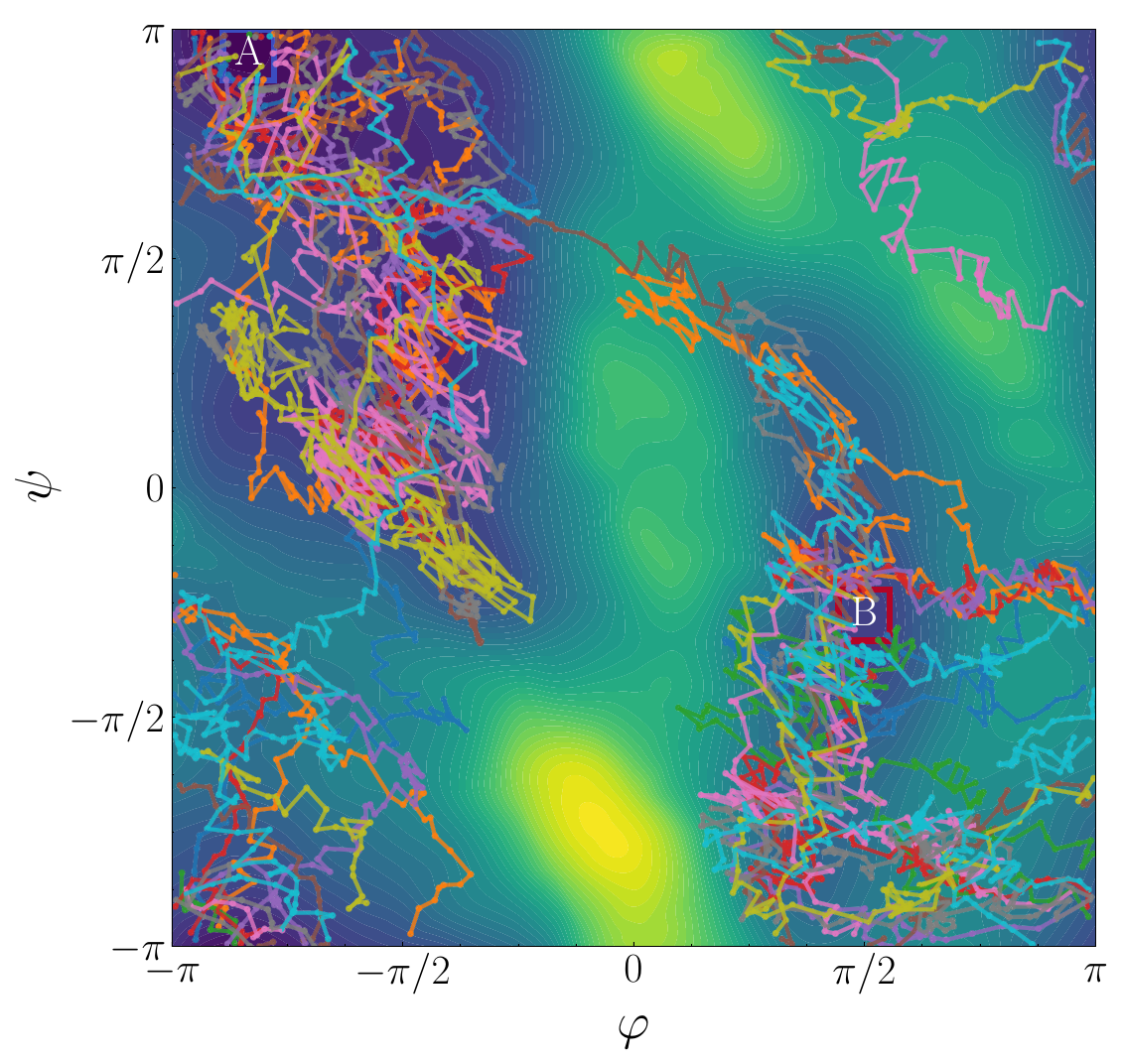}
	\end{subfigure}
    \begin{subfigure}{0.29\textwidth}
		\includegraphics[width=\linewidth]{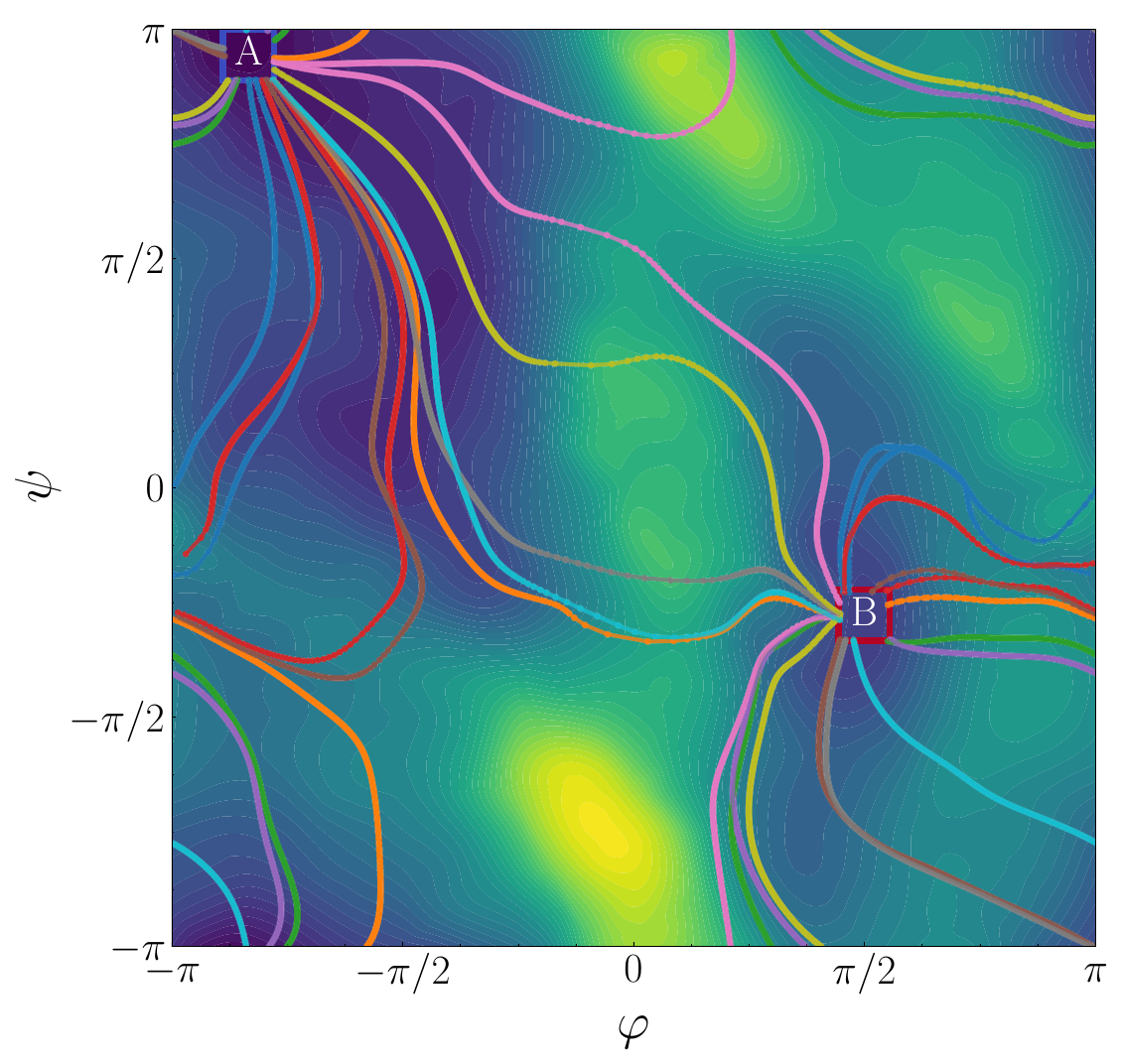}
	\end{subfigure}
    \begin{subfigure}{0.32\textwidth}
		\includegraphics[width=\linewidth]{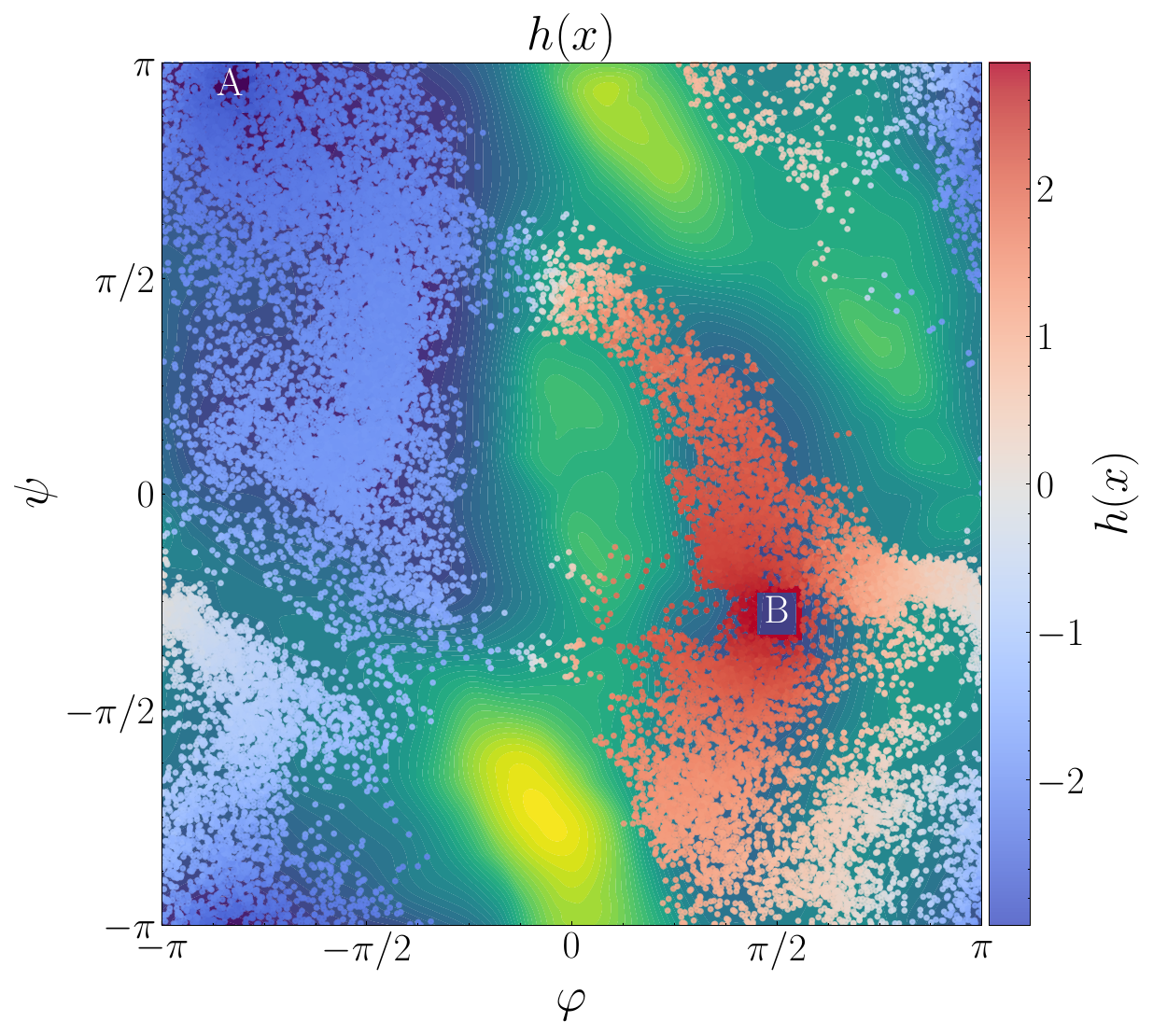}
	\end{subfigure}
    
	\caption{\textbf{Transition path, current trajectories and potential $h$ on ADP.} Reactive trajectories (left), integrated streamlines of the learned current velocity (center), and the learned potential $h(x)$ (right), projected onto the backbone dihedral angles $(\varphi, \psi)$. The streamlines reliably reach $B$ from initial conditions in $A$ and resolve the two main reaction channels; the potential $h$ varies monotonically along the reaction, providing a one-dimensional reaction coordinate.}
	\label{fig:alanine-ramachandran}
\end{figure}

\subsection{Dataset Generation and Evaluation Metrics}

The datasets for all systems in this study are generated using Transition Path Sampling (TPS) with one-way shooting moves. Briefly, shooting moves perform MCMC sampling in path space to obtain an ensemble of reactive trajectories under a given system's dynamics. Initial trajectories are generated at high temperature, followed by a brief equilibration period, after which the production ensemble of reactive paths is collected. System-specific details are deferred to~\cref{app: test systems}.

Once integrated, the learned current velocity should produce paths whose statistics match those of the reactive path ensemble. We therefore evaluate generated trajectories using the following metrics, in addition to visualizing the level sets of $h$:

\begin{itemize}
    \item \textbf{Completion rate:} Completion rate measures the fraction of flow lines, that reach $\partial A$, and $\partial B$ on reverse and forward integration within a fixed number of integration steps from a point randomly sampled from the TPS ensemble. A score of 1 indicates an accurately learned current-velocity field.
    \item \textbf{Torsional Wasserstein ($\mathbb{T} - W_2$):} The torsional W-2 distance between samples generated from current trajectories and compared to samples from the TPS ensemble. Lower values indicate a more accurately learned current.  $\mathbb{T}-W_2$ is computed over batches of 10,000 samples.
\end{itemize}

\begin{table}[tbh]
	\caption{\textbf{Quantitative results across systems.} Values are computed over 256 trajectories. In both systems, the generated trajectories closely reproduce the values observed in the reactive trajectory ensemble.}
	\label{tab:results}
	\centering
	\begin{tabular}{lcccccc}
		\toprule
		System & Method & Features & Completion rate & $\mathbb{T}$-$W_2$\\
		\midrule
		ADP & TPS & $\mathbb{R}^{3N}$ & 1 & $0.0875 \pm 0.006$\\
        ADP & FM & $\mathbb{R}^{3N}$ & 0.7734 & $0.6453 \pm 0.016$\\
        ADP & FM & Dihedrals & 0.9804 & $0.7220 \pm 0.017$\\
        \midrule
        AIB9 & TPS & $\mathbb{R}^{3N}$ & 1 & $0.1011 \pm 0.034$ \\
        AIB9 & FM & Backbone Dihedrals & 0.7578 & $0.9137 \pm 0.007$\\
		\bottomrule
	\end{tabular}
\end{table}

\subsection{Alanine Dipeptide}
\label{sec:alanine-dipeptide}

Alanine Dipeptide (N-acetyl-L-alanine-N'-methylamide) is a canonical benchmark system for testing rare event methods in molecular simulation \citep{bolhuis2000reaction, ma2005automatic, ren2005transition}. Despite its small size---22 atoms and 66 degrees of freedom in Cartesian coordinates---the molecule exhibits conformational transitions between metastable states that share qualitative features with larger biomolecular systems. The free energy landscape is typically characterized using two backbone dihedral angles, $\phi$ (C-N-C$_\alpha$-C) and $\psi$ (N-C$_\alpha$-C-N), which provide a natural low-dimensional projection \citep{ramachandran1963stereochemistry}. 

We study the transition between the main metastable states, $C_7^{\text{eq}}$ (equatorial) and $C_7^{\text{ax}}$ (axial) \citep{tobias1993molecular, smith1999alanine}. Reactive trajectories are generated using transition path sampling in vacuum at 300\,K using OpenPathSampling \citep{swenson2019openpathsampling1, swenson2019openpathsampling2}.\footnote{\url{http://openpathsampling.org}}. We train separate models on the full Cartesian space $\mathbb{R}^{3N}$ space, and on the full set of 7 dihedral angles of the molecule. 

Table~\ref{tab:results} reports the performance of both representations. The dihedral representation generally yields stronger results, primarily due to its lower dimensionality. This also highlights an advantage of Flux Matching: it remains exact even under non-Markovian projections (Section~\ref{sec:nonmarkov}).

Figure~\ref{fig:alanine-ramachandran} shows the TPS reactive trajectories (left) and the streamlines of the learned current velocity (center), both projected onto the ($\varphi, \psi$) dihedral plane. The projected flow lines yield smooth transitions between states and are visibly more interpretable than the raw reactive path ensemble. Figure~\ref{fig:alanine-ramachandran} additionally displays the value of $h$ on snapshots from reactive trajectories. The potential $h$ increases monotonically along the reaction channels connecting the two states, indicating that it captures the relevant motions of the reaction and serves as a natural collective variable for further analysis. Example transition-path structures colored by $h$, together with rate constants computed from weighted-ensemble simulations using $h$ as the collective variable, are provided in~\cref{sec: structures,app:adp_rate}.

\subsection{AIB9 molecular system}
\label{sec: AIB9}

\begin{figure}[t]
    \centering
	\begin{subfigure}{0.29\textwidth}
		\includegraphics[width=\linewidth]{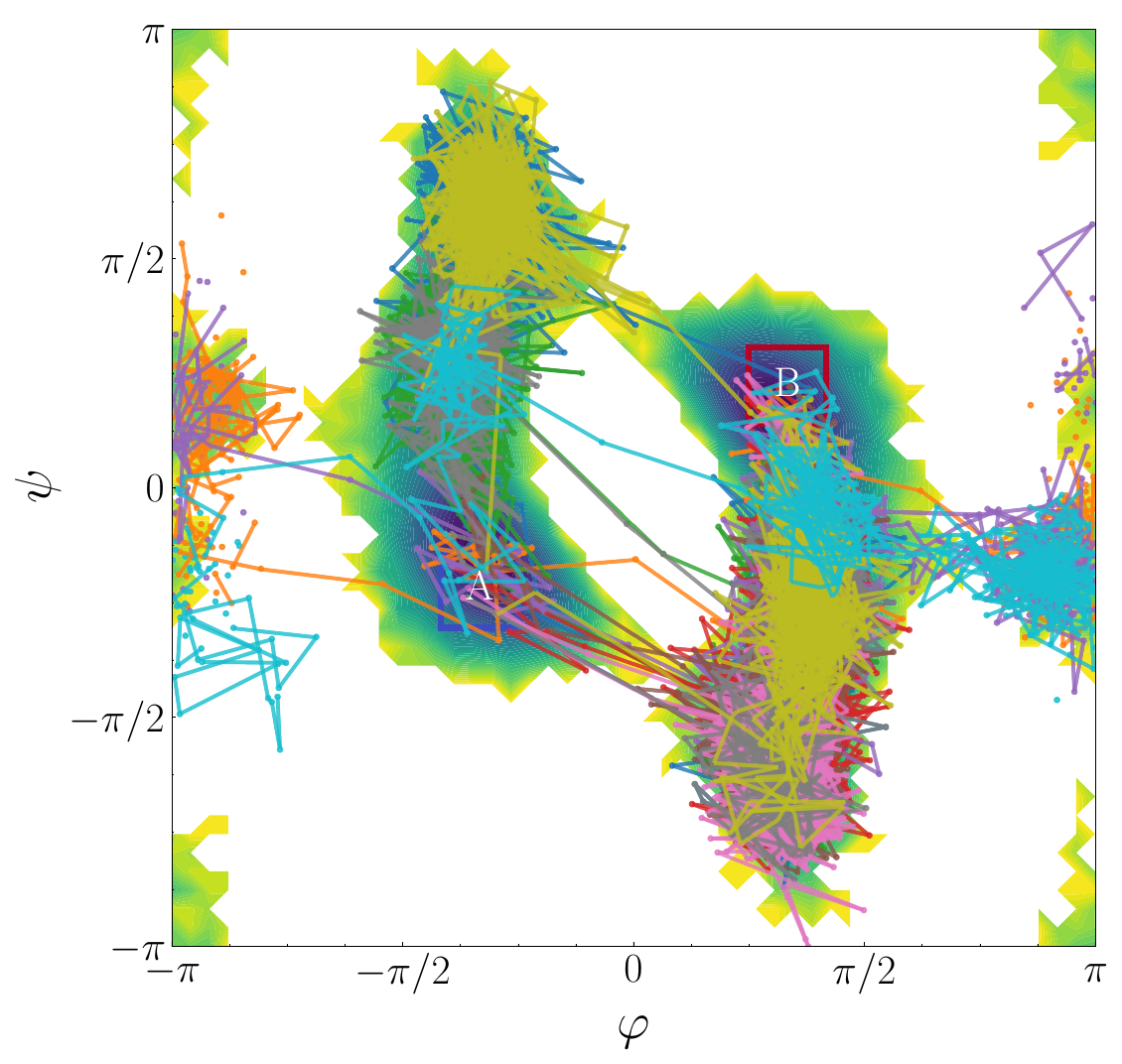}
	\end{subfigure}
    \begin{subfigure}{0.29\textwidth}
		\includegraphics[width=\linewidth]{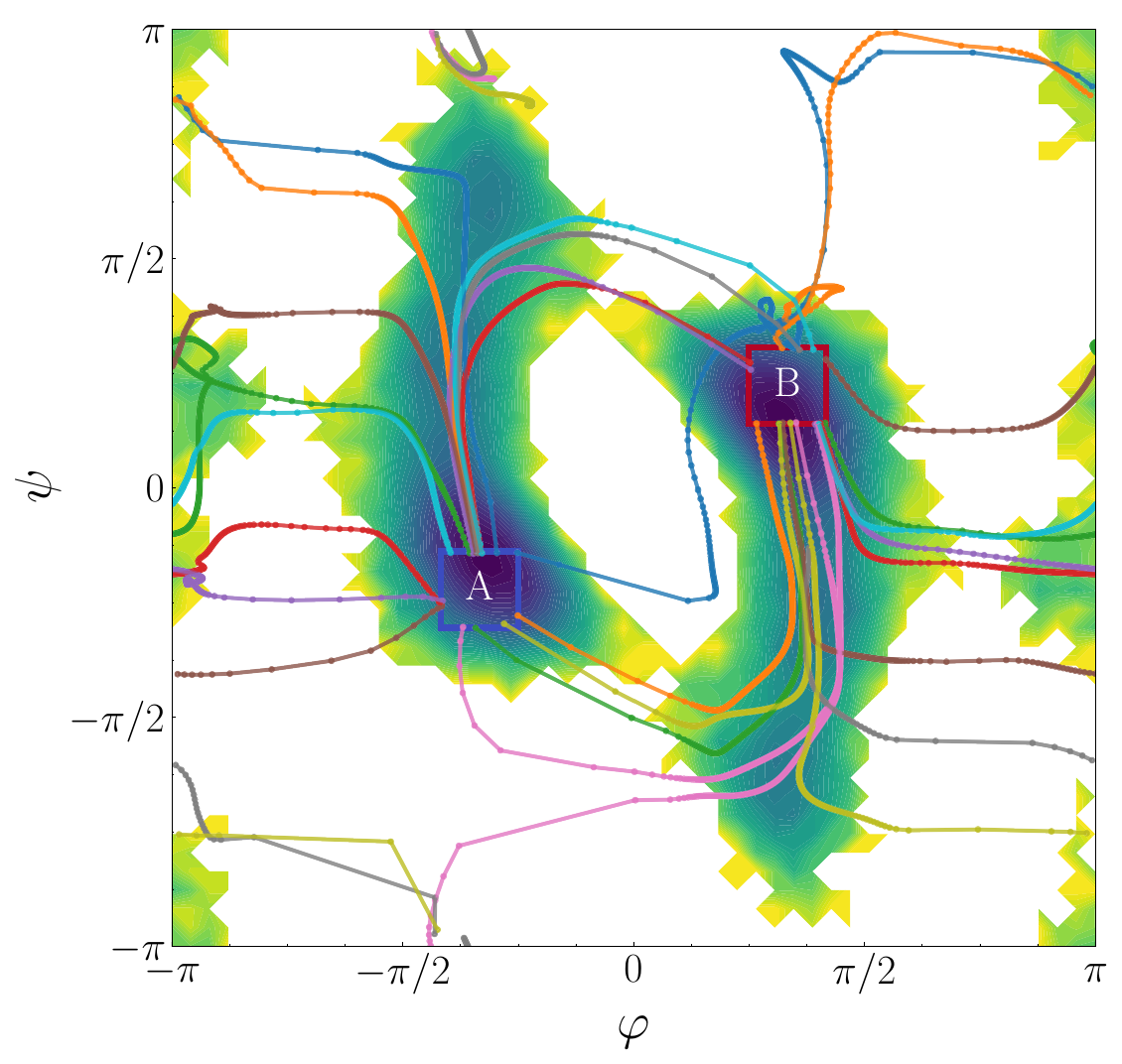}
	\end{subfigure}
    \begin{subfigure}{0.32\textwidth}
		\includegraphics[width=\linewidth]{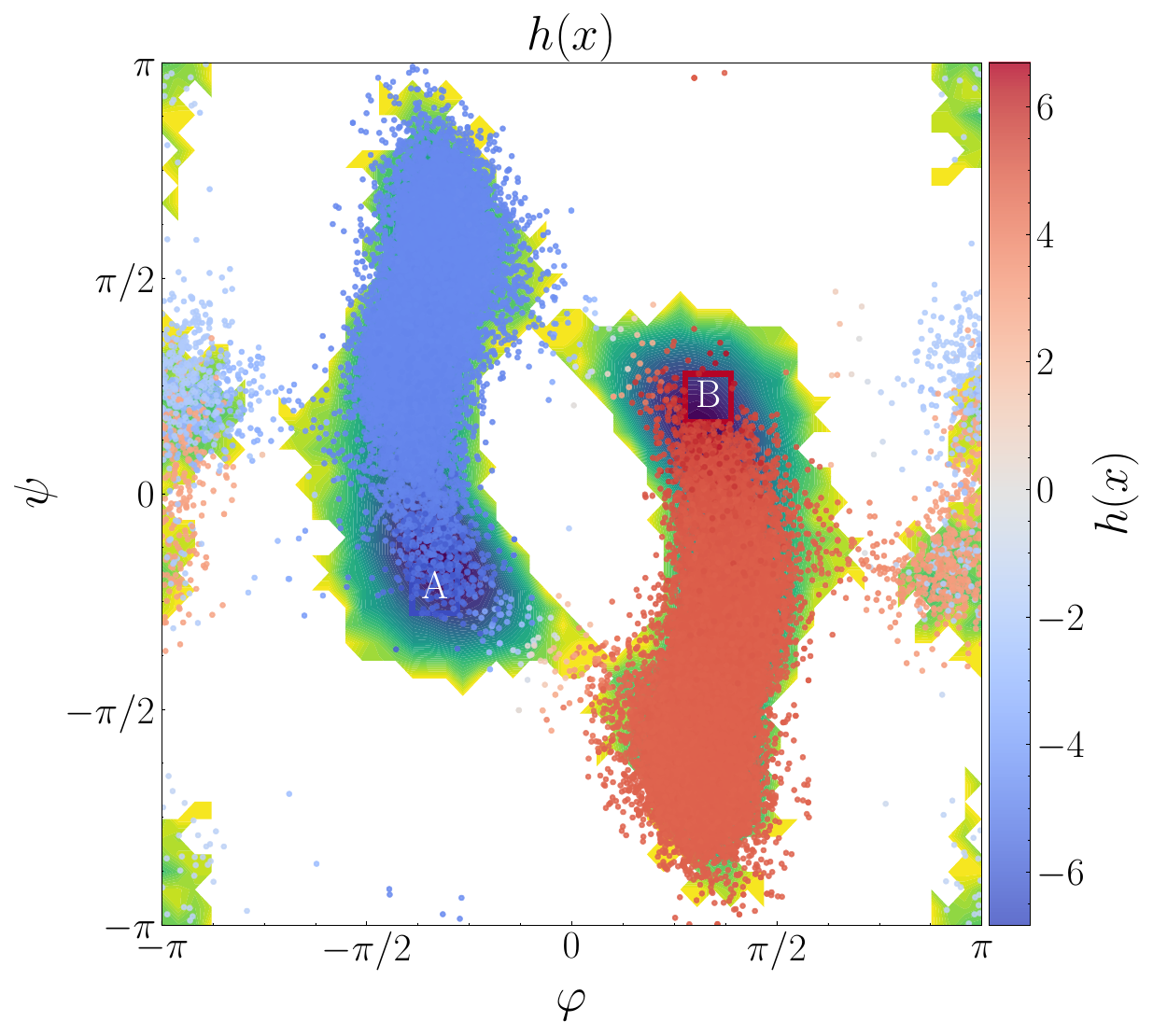}
	\end{subfigure}
    
	\caption{\textbf{Transition paths, current trajectories, and potential $h$ for AIB9.} Reactive trajectories (left), integrated streamlines of the learned current velocity (center), and the reactive potential (right), all projected onto the backbone dihedral angles. The streamlines reliably reach $B$ from initial conditions in $A$, and the potential $h$ varies monotonically along the reaction, providing a one-dimensional reaction coordinate.}
	\label{fig:AIB9_ramachandran}
\end{figure}

We next consider the synthetic AIB9 peptide. This system is more challenging, as it exhibits intermediate metastable states that complicate the learning of both the reactive current and the collective variable. Reactive trajectories are generated with TPS in vacuum at 500\,K using OpenPathSampling. The system is also higher-dimensional, with 129 atoms; we therefore operate directly on the 29 backbone dihedral angles.

As shown in Table~\ref{tab:results}, a large fraction of the generated trajectories successfully connect the two states through appropriate reaction channels. Figure~\ref{fig:AIB9_ramachandran} shows the reactive path ensemble (left), generated flow lines (center), and learned collective variable $h$ (right), projected onto the dihedral angles of the central residue. Reactive-trajectory snapshots are saved every 100\,fs, producing visibly large jumps in this projection, whereas the flow lines exhibit smooth transitions between states---further supporting our claim that the learned currents yield more interpretable reaction mechanisms. As with ADP, $h$ varies monotonically along the reaction direction. Additional structures colored by $h$ are provided in~\cref{sec: structures}.

\section{Conclusion}
\label{sec:conclusion}

We have presented Flux Matching, a framework for learning the probability current of reactive trajectories directly from path sampling data. The method is agnostic to how reactive trajectories are generated: it can be applied to ensembles from transition path sampling, transition interface sampling, forward flux sampling, weighted ensemble, or any other method that produces reactive paths. This flexibility allows Flux Matching to leverage well-established sampling techniques while using modern flow matching tools to estimate the current velocity field. By operating directly in the full state space, the method bypasses the need for collective variables, which are often difficult to identify in complex systems. For systems satisfying detailed balance, the learned current connects to the committor function through the gradient structure $u \propto \nabla \log [q_+/(1-q_+)]$; for general non-Markovian systems, the current velocity remains well-defined even when the committor is not. The learned current provides both quantitative information---transition rates, milestone placement---and qualitative mechanistic insight into reaction pathways and transition state regions.

Scaling the method to larger systems---from complex biomolecular dynamics to atmosphere-ocean flows---remains an important direction for future work. This will require neural network architectures capable of handling high-dimensional state spaces, as well as strategies to improve the quality of the reactive trajectory ensemble. A natural approach is to iterate between path sampling and current estimation: the milestones extracted from a learned current can be used to define interfaces for transition interface sampling or bins for weighted ensemble, which in turn generate trajectories that improve the current estimate. Such an iterative scheme would combine the strengths of classical rare event methods with the representational power of modern generative models.

\section{Acknowledgements}
\label{sec:acknowledgements}

We thank Carles Domingo-Enrich and Jonathan Weare for valuable discussions on the project.

This work is funded through R35GM140753 from the National Institute of General Medical Sciences. The content is solely the responsibility of the authors and does not necessarily represent the official views of the National Institute of General Medical Sciences or the National Institutes of Health.

\bibliographystyle{unsrtnat}
\bibliography{reference}

\newpage
\appendix





\section{Connection to Transition Path Theory}\label{app:tpt}

In this appendix, we show that when the dynamics is Markovian, our framework recovers the objects from transition path theory (TPT). Consider an SDE of the form
\begin{equation}\label{eq:sde_app}
    dz_t = b(z_t) \, dt + \sqrt{2} \, \sigma(z_t) \, dW_t,
\end{equation}
with stationary density $\rho(z)$ and diffusion tensor $D(z) = \sigma(z)\sigma^\top(z)$.

\begin{proposition}[Forward and backward committors]\label{prop:committors}
Let $q(z)$ denote the probability of reaching $B$ before $A$ starting from $z$, and $\tilde q(z)$ the probability of having come from $A$ rather than $B$. These satisfy the boundary value problems
\begin{align}
    0 &= b \cdot \nabla q + D : \nabla\nabla q, \label{eq:forward_committor_app}\\
    0 &= (-b + 2D\nabla \log \rho) \cdot \nabla \tilde q + D : \nabla\nabla \tilde q, \label{eq:backward_committor_app}
\end{align}
with boundary conditions $q|_{\partial A} = \tilde q|_{\partial B} = 0$ and $q|_{\partial B} = \tilde q|_{\partial A} = 1$.
\end{proposition}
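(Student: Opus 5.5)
The forward equation follows from the Markov property and Itô's formula, i.e.\ the standard Dynkin/Feynman--Kac argument. The backward equation follows by applying the same argument to the time-reversed process, whose generator is the adjoint of the forward generator with respect to the stationary measure $\rho$.

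\emph{Forward committor.} Let $L = b\cdot\nabla + D:\nabla\nabla$ be the generator of \eqref{eq:sde_app}, where the factor $\sqrt2$ gives diffusion coefficient $\sigma\sigma^\top = D$. Let $\tau_{A\cup B}$ be the first hitting time of $A\cup B$. Then
\[
q(z)=\mathbb P_z\bigl(z_{\tau_{A\cup B}}\in B\bigr),
\]
so $q$ is the solution of the Dirichlet problem $Lq=0$ in $\OmegaR$ with $q=0$ on $\partial A$ and $q=1$ on $\partial B$. To verify this, take the smooth solution $v$ of that boundary value problem, which exists by elliptic theory under smoothness of $\partial A,\partial B$ and uniform ellipticity of $D$. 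Apply Itô's formula to $v(z_{t\wedge\tau_{A\cup B}})$. The drift term vanishes because $Lv=0$, so this is a bounded martingale. Optional stopping, together with $\tau_{A\cup B}<\infty$ a.s.\ (from ergodicity of the stationary process), gives $v(z)=\E_z[v(z_{\tau_{A\cup B}})]=q(z)$. This yields \eqref{eq:forward_committor_app}.

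\emph{Backward committor.} Consider the stationary process run in reverse, $\tilde z_s = z_{T-s}$. By the Haussmann--Pardoux time-reversal theorem, it is a diffusion with the same $D$ and drift
\[
\tilde b = -b + 2\rho^{-1}\operatorname{div}(D\rho) = -b + 2\operatorname{div}D + 2D\nabla\log\rho .
\]
Its generator is the $L^2(\rho)$-adjoint
\[
\tilde L = \tilde b\cdot\nabla + D:\nabla\nabla .
\]
By stationarity, ``having come from $A$ rather than $B$'' for the forward process is the same as ``reaching $A$ before $B$'' for the reversed process. So $\tilde q$ is the committor of $\tilde L$ with the roles of $A$ and $B$ swapped. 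The forward argument then gives $\tilde L\tilde q=0$ with $\tilde q|_{\partial A}=1$ and $\tilde q|_{\partial B}=0$.

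\emph{Main obstacle: the divergence term.} The computed $\tilde L$ contains an extra first-order term $2(\operatorname{div}D)\cdot\nabla\tilde q$ that does not appear in \eqref{eq:backward_committor_app}. I will handle it as follows.
\begin{itemize}
\item When $D$ is constant, $\operatorname{div}D=0$ and $\tilde L$ matches \eqref{eq:backward_committor_app} exactly.
\item For general $D$, the natural reading is that $\nabla\log\rho$ in \eqref{eq:backward_committor_app} should be understood through $\rho^{-1}\operatorname{div}(D\rho)$. Equivalently, one can fold $\operatorname{div}D$ into the drift, as the expression for $u$ in \eqref{eq:u_tpt} does.
\item To confirm the adjoint formula directly, I will check that $\int \rho\, f\, Lg = \int \rho\, g\, \tilde L f$ by integrating by parts and using the stationary Fokker--Planck equation $\operatorname{div}(b\rho - \operatorname{div}(D\rho))=0$.
\end{itemize}
This verification pins down exactly which drift appears and makes the constant-$D$ case rigorous. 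For variable $D$, I would state the proposition with the divergence-corrected drift.
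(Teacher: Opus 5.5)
Your proof is correct and follows the same route as the paper. The forward committor $q$ is $\mathcal{L}$-harmonic with the stated Dirichlet data by the backward Kolmogorov (Dynkin) argument; you justify this via It\^o's formula and optional stopping, where the paper simply asserts it. The backward committor $\tilde q$ is the $A$-before-$B$ committor of the time-reversed diffusion.

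The discrepancy you flag is genuine. The paper's proof uses the reversed drift $-b + 2D\nabla\log\rho$, but the correct Haussmann--Pardoux drift is $-b + 2\rho^{-1}\operatorname{div}(D\rho) = -b + 2\operatorname{div}D + 2D\nabla\log\rho$. This is consistent with the paper's own stationary current $j = b\rho - \operatorname{div}(D\rho)$. So \eqref{eq:backward_committor_app} as written holds only when $\operatorname{div}D = 0$ (e.g.\ constant $D$), and your divergence-corrected version is the right statement in general.
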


\begin{proof}
The forward committor $q(z) = \mathbb{P}_z(\tau_B < \tau_A)$ satisfies the backward Kolmogorov equation for the process \eqref{eq:sde_app}. Let $\mathcal{L} = b \cdot \nabla + D : \nabla\nabla$ denote the generator. For $z \in \Omega_R$, the function $q$ is harmonic with respect to $\mathcal{L}$, i.e., $\mathcal{L} q = 0$, with the stated boundary conditions.

The backward committor $\tilde q(z)$ is the probability that the time-reversed process came from $A$. Under time reversal, the SDE \eqref{eq:sde_app} with stationary measure $\rho$ becomes
\begin{equation}
    d\tilde z_t = \tilde{b}(\tilde z_t) \, dt + \sqrt{2} \, \sigma(\tilde z_t) \, dW_t,
\end{equation}
where the process $\tilde z_t$ has the same law as $z_{-t}$ and the reversed drift is $\tilde{b} = -b + 2D\nabla\log\rho$. The backward committor satisfies the backward Kolmogorov equation for this reversed process, giving \eqref{eq:backward_committor_app}.
\end{proof}

\begin{proposition}[Reactive density and current]\label{prop:reactive_density_current}
The reactive density and current are given by
\begin{align}
    \rho_R &= C_R^{-1} \rho \, q \, \tilde q, \label{eq:rho_tpt_app}\\
    j_R &= C_R^{-1} \rho \left[ \tilde q \, D \nabla q - q \, D \nabla \tilde q \right], \label{eq:j_tpt_app}
\end{align}
where $C_R = \int_{\Omega_R} \rho(z) \, q(z) \, \tilde q(z) \, dz$.
\end{proposition}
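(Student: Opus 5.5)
The plan is to realize the reactive path ensemble $\mathbb{P}$ as the collection of reactive segments cut out of one long stationary trajectory of \eqref{eq:sde_app}, and to compute the pairings \eqref{eq:density} and \eqref{eq:current} by ergodic averages. Along an infinitely long trajectory $(z_t)_{t\in\R}$, define $\chi_R(t)=\mathbf 1_{\OmegaR}(z_t)\,\mathbf 1\{\text{last visit to }A\cup B \text{ before } t \text{ was } A\}\,\mathbf 1\{\text{next visit after } t \text{ is } B\}$. Then $\chi_R(t)=1$ exactly when $t$ lies in a reactive segment. The pairings \eqref{eq:density} and \eqref{eq:current} are long-time averages of $\phi(z_t)\chi_R(t)\,dt$ and $\chi_R(t)\psi(z_t)\circ dz_t$, divided by a normalizing constant. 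This constant is fixed afterwards by matching the prefactor $C_R^{-1}$ in \eqref{eq:rho_tpt_app}; it amounts to normalizing by the stationary fraction of time spent reactive rather than by the number of transitions. The committor characterizations of Proposition~\ref{prop:committors} are taken as given.

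\textbf{Step 1 (density).} By stationarity and the Markov property at time $t$, the past and future of $z_t$ are conditionally independent given $z_t=z$. The future factor is $q(z)$, and the past factor is $\tilde q(z)$, using the time-reversed process from the proof of Proposition~\ref{prop:committors}. Hence $\E[\phi(z_t)\chi_R(t)]=\int_{\OmegaR}\phi\,\rho\,q\,\tilde q\,dz$. Ergodicity turns the time average into this integral, and dividing by $C_R$ gives \eqref{eq:rho_tpt_app}.

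\textbf{Step 2 (current).} I would write the Stratonovich integral as the limit of the midpoint sums $\sum \tfrac12\bigl(\psi(z_{t})+\psi(z_{t+\delta})\bigr)\cdot(z_{t+\delta}-z_t)$, restricted to increments inside reactive segments. To leading order in $\delta$, the weight on an increment from $z_t$ to $z_{t+\delta}$ is $\tilde q(z_t)\,q(z_{t+\delta})$. I then need the conditional first moments of the increment given $z_t=z$. Forward in time the drift is $b$. For the symmetrized, Stratonovich form I would average the forward expansion with the backward one, which uses the reversed drift $\tilde b=-b+2D\nabla\log\rho$. Expanding $q(z_{t+\delta})\approx q(z)+\nabla q\cdot(z_{t+\delta}-z)$, the second moment $2D\delta$ produces the term $\rho\,\tilde q\,D\nabla q$. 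The analogous expansion of $\tilde q$ along the backward increment produces $-\rho\,q\,D\nabla\tilde q$. The zeroth-order terms combine to $q\tilde q\,\tfrac12\rho(b-\tilde b)-q\tilde q\operatorname{div}(D\rho)$-type contributions, which assemble into $q\tilde q\,j$ with $j=b\rho-\operatorname{div}(D\rho)$. A consistency check is that the result must satisfy the boundary value problem of Lemma~\ref{thm:divcondition}. Using \eqref{eq:forward_committor_app}--\eqref{eq:backward_committor_app} and $\operatorname{div} j=0$, one verifies $\operatorname{div} j_R=0$ in $\OmegaR$, with normal flux $\rho\,D\nabla q\cdot\hat n$ on $\partial A$ (where $q=0$, $\tilde q=1$) and the symmetric expression on $\partial B$.

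\textbf{Step 3 (matching the stated formula).} This step is the main obstacle. The computation in Step 2 naturally yields $C_R^{-1}\bigl(q\tilde q\,j+\rho[\tilde q D\nabla q-qD\nabla\tilde q]\bigr)$, which agrees with \eqref{eq:rhoj_tpt} in the main text. Equation \eqref{eq:j_tpt_app} as stated omits the $q\tilde q\,j$ term. I would therefore check carefully whether this term cancels. If it does not, \eqref{eq:j_tpt_app} holds exactly only when $j=0$, i.e.\ under detailed balance, where also $\tilde q=1-q$. I expect the statement is meant either under that assumption or with $j$ absorbed, and I would prove it in that setting while recording the general formula.

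A secondary delicate point is justifying the Stratonovich midpoint limit against the indicator weights, which are discontinuous at $\partial A\cup\partial B$. I would handle this by using test functions $\psi$ compactly supported in $\OmegaR$ for the bulk identity and recovering the boundary behaviour from Lemma~\ref{thm:divcondition}.
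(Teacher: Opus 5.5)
Your Step 3 diagnosis is correct, and you can state it as a settled conclusion: the $q\tilde q\,j$ term does not cancel. Combining $\rho(b\cdot\nabla q + D:\nabla\nabla q)=0$ with $j=b\rho-\operatorname{div}(D\rho)$ gives $\operatorname{div}(\rho D\nabla q) = -j\cdot\nabla q$. The reversed committor equation gives $\operatorname{div}(\rho D\nabla\tilde q) = j\cdot\nabla\tilde q$. Together these yield
\[
\operatorname{div}\bigl(\rho\,[\tilde q\,D\nabla q - q\,D\nabla\tilde q]\bigr) = -\,j\cdot\nabla(q\tilde q).
\]
Meanwhile $\operatorname{div}(q\tilde q\,j) = j\cdot\nabla(q\tilde q)$, since $\operatorname{div} j=0$, and $q\tilde q\,j$ has zero normal component on $\partial A\cup\partial B$. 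So \eqref{eq:j_tpt_app} is divergence-free only when $j=0$. The paper's proof, which claims this check follows from the committor equations, breaks exactly there. Your general formula is the one in the main text \eqref{eq:rhoj_tpt}, and it is what \eqref{eq:u_tpt} requires; Proposition~\ref{prop:current_velocity} inherits the same omission. Proving the displayed identity under detailed balance and recording the general one is the right resolution.

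On the route: your Step 1 is the paper's argument. For the current, however, the paper only checks a candidate against the boundary-value problem of Lemma~\ref{thm:divcondition}. Even when that check succeeds, it cannot identify $j_R$, because the problem is underdetermined: you can add any divergence-free field tangent to $\partial A\cup\partial B$. Your Step 2 actually derives $j_R$ from Definition~\ref{def:rdc}, and you correctly use the divergence condition only as a consistency check.

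Two points need tightening. First, in your symmetric bookkeeping the zeroth-order terms are exactly $\tfrac12\rho\,q\tilde q\,(b-\tilde b)$. This already equals $q\tilde q\,j$ when $\tilde b=-b+2\rho^{-1}\operatorname{div}(D\rho)$. A separate $-q\tilde q\operatorname{div}(D\rho)$ appears only if you condition both halves of the midpoint sum forward and integrate the $\nabla\psi$ term by parts, so do not include both. Note also that the reversed drift $-b+2D\nabla\log\rho$ in Proposition~\ref{prop:committors} is correct only for constant $D$; with it you would get $\rho b-D\nabla\rho$ rather than $j$.

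Second, under Definition~\ref{def:rdc} the constant is not yours to choose. Since $\mathbb{P}$ is a probability measure on paths, the ergodic identification divides by the number $\nu_{AB}$ of $A\to B$ transitions per unit time. This gives $\rho_R=\nu_{AB}^{-1}\rho q\tilde q$, with the same prefactor for $j_R$, which is consistent with $\int\rho_R=T_R$ and unit total flux. Because $C_R=\nu_{AB}T_R$, the stated $C_R^{-1}$ instead normalizes $\rho_R$ to unit mass. Say explicitly that you are adopting the statement's convention rather than deriving it; this is harmless for $u=j_R/\rho_R$.
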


\begin{proof}
Using the Markovian property, a point $z \in \Omega_R$ belongs to a reactive trajectory if and only if the trajectory will reach $B$ before $A$ (probability $q(z)$) and came from $A$ rather than $B$ (probability $\tilde q(z)$). The reactive density is therefore proportional to $\rho(z) q(z) \tilde q(z)$, giving \eqref{eq:rho_tpt_app}.

For the current, we use the probability flux formula. The flux of the stationary process is $j = \rho b - D\nabla\rho$. The reactive current counts only trajectories that are reactive, weighted by the probability of being so. Using the product rule and the committor equations \eqref{eq:forward_committor_app}--\eqref{eq:backward_committor_app}, one can verify that \eqref{eq:j_tpt_app} satisfies $\mathrm{div}\, j_R = 0$ with the correct boundary flux, and that $j_R = \rho_R u$ with $u$ given below.
\end{proof}

\begin{proposition}[Current velocity]\label{prop:current_velocity}
The current velocity is
\begin{equation}\label{eq:u_tpt_app}
    u(z) = D(z) \nabla \log q(z) - D(z) \nabla \log \tilde q(z).
\end{equation}
For systems satisfying detailed balance, $\tilde q = 1 - q$, and this simplifies to
\begin{equation}
    u(z) = D(z) \nabla \log \frac{q(z)}{1-q(z)f}fd.
\end{equation}
\end{proposition}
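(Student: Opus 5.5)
The plan is to obtain the formula directly from Proposition~\ref{prop:reactive_density_current} by taking the ratio in the definition~\eqref{eq:current_velocity}, $u = j_R/\rho_R$. The normalizing constant $C_R^{-1}$ appears in both~\eqref{eq:rho_tpt_app} and~\eqref{eq:j_tpt_app}, so it cancels. What remains is
\[
u = \frac{\rho\bigl[\tilde q\, D\nabla q - q\, D\nabla \tilde q\bigr]}{\rho\, q\,\tilde q} = D\,\frac{\nabla q}{q} - D\,\frac{\nabla\tilde q}{\tilde q} = D\nabla\log q - D\nabla\log\tilde q,
\]
which is~\eqref{eq:u_tpt_app}.

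For this division to be legitimate, $\rho q\tilde q$ must be strictly positive in $\OmegaR$. I would take $\rho>0$ as a standing assumption, since the diffusion is nondegenerate with $D=\sigma\sigma^\top$ positive definite. The committors $q$ and $\tilde q$ solve the elliptic equations~\eqref{eq:forward_committor_app}--\eqref{eq:backward_committor_app} with boundary data $0$ and $1$. By the strong maximum principle they therefore take values strictly in $(0,1)$ in $\OmegaR$, assuming $\OmegaR$ is connected or each component touches both $\partial A$ and $\partial B$. So the logarithms and their gradients are well defined in the interior, and $u$ blows up only at the boundaries, consistent with $h=\log[q/(1-q)]\to\mp\infty$ there.

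For the detailed-balance statement, I would first show $\tilde q = 1-q$. Detailed balance means $j = b\rho - \operatorname{div}(D\rho)=0$. Combined with the standing convention of the appendix, in which the $\operatorname{div}D$ term is absent or vanishes (e.g.\ constant $D$), this gives $b = D\nabla\log\rho$. Hence the reversed drift is $\tilde b = -b + 2D\nabla\log\rho = b$, and~\eqref{eq:backward_committor_app} becomes the same equation as~\eqref{eq:forward_committor_app}. The function $1-q$ solves this equation with boundary values $1$ on $\partial A$ and $0$ on $\partial B$, which are exactly the conditions imposed on $\tilde q$. Uniqueness for this Dirichlet problem, again via the maximum principle, then gives $\tilde q = 1-q$. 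Substituting into~\eqref{eq:u_tpt_app} and using $-\nabla\log(1-q) = \nabla q/(1-q)$ yields
\[
u = D\Bigl(\frac{\nabla q}{q}+\frac{\nabla q}{1-q}\Bigr) = D\,\frac{\nabla q}{q(1-q)} = D\nabla\log\frac{q}{1-q}.
\]

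The step I expect to be the main obstacle is reconciling this with the main-text formula~\eqref{eq:rhoj_tpt}. There, $j_R$ carries an extra term $q\tilde q\, j$, which leads to the additional drift $j/\rho = b-\operatorname{div}D - D\nabla\log\rho$ in~\eqref{eq:u_tpt}. Proposition~\ref{prop:reactive_density_current} as stated omits this term, so the general formula~\eqref{eq:u_tpt_app} should be read either as conditional on that proposition, or more safely as holding up to the additive term $j/\rho$. Under detailed balance $j=0$, so this term vanishes and both routes agree on $u = D\nabla\log[q/(1-q)]$. In writing the proof I would follow the computation above and add a remark noting the $j/\rho$ correction for non-reversible dynamics.
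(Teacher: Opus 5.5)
Your proposal is correct and follows the paper's proof: divide \eqref{eq:j_tpt_app} by \eqref{eq:rho_tpt_app} so that $C_R$ and $\rho$ cancel, then substitute $\tilde q = 1-q$. The paper merely asserts $\tilde q = 1-q$ under detailed balance, whereas you justify it, and your remark that the first formula drops the $j/\rho$ drift present in \eqref{eq:u_tpt} (so it is exact only when $j=0$) is accurate. One simplification: you do not need $\operatorname{div} D = 0$, because the correct time-reversed drift is $-b + 2\rho^{-1}\operatorname{div}(D\rho)$, which equals $b$ exactly when $j=0$, so $\tilde q = 1-q$ holds for any $D(z)$.
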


\begin{proof}
Dividing \eqref{eq:j_tpt_app} by \eqref{eq:rho_tpt_app}:
\begin{align}
    u = \frac{j_R}{\rho_R} &= \frac{\tilde q D \nabla q - q D \nabla \tilde q}{q \tilde q} \\
    &= D \frac{\nabla q}{q} - D \frac{\nabla \tilde q}{\tilde q} \\
    &= D \nabla \log q - D \nabla \log \tilde q.
\end{align}
For detailed balance, the stationary measure is $\rho \propto e^{-\beta U}$ and the drift satisfies $b = -D\nabla U + \nabla \cdot D$. In this case, the forward and backward committors are related by $\tilde q = 1 - q$, which gives the stated simplification.
\end{proof}

\begin{proposition}[Doob transform]\label{prop:doob}
Reactive trajectories, i.e., trajectories of \eqref{eq:sde_app} conditioned on reaching $B$ before $A$, evolve according to
\begin{equation}
\label{eq:reactive_sde_app}
\begin{aligned}
    dz_t^R &= b(z_t^R) \, dt + 2D(z_t^R)\nabla\log q(z_t^R) \, dt + \sqrt{2}\,\sigma(z_t^R) \, dW_t.
\end{aligned}
\end{equation}
\end{proposition}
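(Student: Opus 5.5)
The plan is to use the standard Doob $h$-transform, with the forward committor $q$ as the harmonic function. By Proposition~\ref{prop:committors}, $q$ solves $\mathcal{L}q = 0$ in $\Omega_R$, where $\mathcal{L} = b\cdot\nabla + D:\nabla\nabla$, with $q=0$ on $\partial A$ and $q=1$ on $\partial B$. Let $\tau_{AB}$ be the first hitting time of $A\cup B$. Conditioning the law of \eqref{eq:sde_app} started at $z\in\Omega_R$ on $\{\tau_B<\tau_A\}$ gives a new path measure. Its density with respect to the unconditioned measure, restricted to $\mathcal{F}_t$ on $\{t<\tau_{AB}\}$, is
\[
M_t = \frac{q(z_{t\wedge\tau_{AB}})}{q(z_0)}.
\]
By the strong Markov property, $\mathbb{P}_z(\tau_B<\tau_A\mid\mathcal{F}_t) = q(z_{t\wedge \tau_{AB}})$. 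So $M_t$ is a nonnegative martingale with $M_0=1$. Equivalently, It\^o's formula together with $\mathcal{L}q=0$ shows $dq(z_t) = \sqrt2\,\nabla q^\top\sigma\,dW_t$, so
\[
dM_t = M_t\,\sqrt2\,(\nabla\log q)^\top\sigma\,dW_t .
\]

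Next I apply Girsanov's theorem with this exponential martingale. Under the conditioned measure,
\[
\tilde W_t = W_t - \int_0^t \sqrt2\,\sigma^\top\nabla\log q(z_s)\,ds
\]
is a Brownian motion. Substituting into \eqref{eq:sde_app} gives
\[
dz_t = b\,dt + \sqrt2\,\sigma\bigl(d\tilde W_t + \sqrt2\,\sigma^\top\nabla\log q\,dt\bigr) = \bigl(b + 2D\nabla\log q\bigr)dt + \sqrt2\,\sigma\,d\tilde W_t,
\]
which is \eqref{eq:reactive_sde_app}. One can cross-check this at the generator level: $q^{-1}\mathcal{L}(q f) = \mathcal{L}f + 2D\nabla\log q\cdot\nabla f$ for $\mathcal{L}q=0$, which is exactly the generator of \eqref{eq:reactive_sde_app}.

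Finally, this process must be linked to the reactive segments of the stationary trajectory. By the Markov property, a reactive segment seen after its last exit from $A$ is distributed as the process started on $\partial A$ and conditioned to hit $B$ before returning to $A$, which is the conditioning just performed.

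The main obstacle is the boundary. At $z_0\in\partial A$ we have $q=0$, so $\nabla\log q$ blows up and $M_t$ is not defined at time zero. To handle this, I would start from $z\in\Omega_R$ and verify the SDE there. I would then justify the entrance law by a limiting argument: start on the surface $\{q=\epsilon\}$ and let $\epsilon\to0$. The drift $2D\nabla\log q$ is strongly repelling from $\partial A$, so the conditioned process leaves $\partial A$ immediately, in the same way as a Bessel-type entrance boundary. A second technical point is that $M_t$ is a true martingale and not merely a local one. This follows because $0\le q\le 1$, so $M_t\le 1/q(z)$ is bounded, and the Girsanov step is therefore legitimate up to $\tau_{AB}$.
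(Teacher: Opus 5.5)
Your proposal is correct and follows the same route as the paper, whose proof is a one-line appeal to the Doob $h$-transform with $h=q$ harmonic for $\mathcal{L}$, $q|_{\partial A}=0$ and $q|_{\partial B}=1$. You fill in what the paper leaves implicit: the bounded martingale $q(z_{t\wedge\tau_{AB}})/q(z_0)$, the Girsanov shift that produces the extra drift $2D\nabla\log q$, the generator identity $q^{-1}\mathcal{L}(qf)=\mathcal{L}f+2D\nabla\log q\cdot\nabla f$, and the entrance behavior at $\partial A$ (one small fix: the last exit from $A$ is not a stopping time, so link to stationary reactive segments by applying the Markov property at fixed times $s$ on the $\mathcal{F}_s$-measurable event that the last visit to $A\cup B$ was to $A$).
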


\begin{proof}
This is the Doob $h$-transform with $h = q$. For a diffusion with generator $\mathcal{L}$, conditioning on reaching a set where $h = 1$ before a set where $h = 0$ (with $\mathcal{L}h = 0$ in between) transforms the drift by adding $2D\nabla\log h$. Since $q$ is harmonic for $\mathcal{L}$ with $q|_{\partial B} = 1$ and $q|_{\partial A} = 0$, the conditioned process has drift $b + 2D\nabla\log q$.
\end{proof}

\section{Examples of reactive path ensembles}
\label{app:examples}
Here, we describe some canonical examples of reactive path ensembles that arise in applications from chemistry and physics.

\paragraph{Overdamped Langevin Dynamics.}

Consider the overdamped Langevin equation
\begin{equation}
    dx_t = -\nabla U(x_t) \, dt + \sqrt{2\beta^{-1}} \, dW_t,
    \label{eq:overdamped}
\end{equation}
where $U : \R^d \to \R$ is a $C^2$ potential such that $Z = \int_{\R^d} e^{-\beta U(x)} dx < \infty$, and $\beta = 1/(k_B T)$ is the inverse temperature with $k_B$ the Boltzmann constant and $T$ the temperature. The invariant density is $\rho(x) = Z^{-1} e^{-\beta U(x)}$.

Taking $z_t = x_t$, the reactive density and current are given by
\begin{align}
    \rho_R(x) &= C_R^{-1} \, e^{-\beta U(x)} q(x) (1 - q(x)), \label{eq:rho_overdamped} \\
    j_R(x) &= C_R^{-1} \beta^{-1} e^{-\beta U(x)} \nabla q(x), \label{eq:j_overdamped}
\end{align}
where $C_R = \int_{\R^d} e^{-\beta U(x)} q(x)(1 - q(x)) \, dx$ is a normalization constant, and $q(x)$ is the \emph{committor function}: the probability that the solution to \eqref{eq:overdamped}, started from $x$, reaches $B$ before $A$. It satisfies the backward Kolmogorov equation
\begin{equation}
    -\nabla U(x) \cdot \nabla q(x) + \beta^{-1} \Delta q(x) = 0,
    \label{eq:bke_overdamped}
\end{equation}
with boundary conditions $q|_{\partial A} = 0$ and $q|_{\partial B} = 1$. Note that because detailed balance holds, the backward committor is simply $1 - q(x)$.

The current velocity is therefore
\begin{equation}
    u(x) = \frac{\beta^{-1} \nabla q(x)}{q(x)(1 - q(x))}.
    \label{eq:u_overdamped}
\end{equation}

\paragraph{Underdamped Langevin Dynamics.}

Consider the underdamped Langevin equation
\begin{align}
    dx_t &= v_t \, dt, \label{eq:underdamped_x} \\
    dv_t &= -\nabla U(x_t) \, dt - \gamma v_t \, dt + \sqrt{2\beta^{-1} m^{-1} \gamma} \, dW_t, \label{eq:underdamped_v}
\end{align}
where $\gamma > 0$ is the friction coefficient and $m$ is the (diagonal) mass matrix. The invariant density is $\rho(x,v) = Z_H^{-1} e^{-\beta H(x,v)} $, where $H(x,v) = U(x) + \frac{1}{2} v^\top m \, v$ is the Hamiltonian and $Z_H = \int_{\mathbb{R}^d \times \mathbb{R}^d} e^{-\beta H(x,v)} dx dv$ is the partition function.

Taking $z_t = (x_t, v_t)$, the reactive density and current are given by
\begin{align}
    \rho_R(x, v) &= C_R^{-1} e^{-\beta H(x,v)} q(x, v) (1 - q(x, -v)),
    \label{eq:rho_underdamped}\\
    j_R(x, v) &= C_R^{-1} e^{-\beta H(x,v)} \begin{pmatrix}
        v\\ -\nabla U(x) - g_R(x,v)
    \end{pmatrix}
    \label{eq:j_underdamped}
\end{align}
where the normalization constant is
\begin{equation}
    C_R = \int_{\R^d\times\R^d} e^{-\beta H(x,v)} q(x, v) (1 - q(x, -v)) \, dx \, dv,
\end{equation}
and we defined
\begin{equation}
    \label{eq:g:def}
    \begin{aligned}
    g_R(x,v) = \gamma \beta^{-1} m^{-1} [ &(1-q(x, -v)) \nabla_v q(x, v)- q(x,v) \nabla_v q(x, -v)].
    \end{aligned}
\end{equation}
The committor $q(x, v)$ now depends on both position and velocity, and satisfies the backward Kolmogorov equation
\begin{equation}
    v \cdot \nabla_x q - \nabla U(x) \cdot \nabla_v q - \gamma v \cdot \nabla_v q + \beta^{-1} m^{-1} \gamma \Delta_v q = 0,
    \label{eq:bke_underdamped}
\end{equation}
with boundary conditions $q|_{\partial A} = 0$ and $q|_{\partial B} = 1$.

\begin{remark}
The factor $(1 - q(x, -v))$ in \eqref{eq:rho_underdamped}, with velocity reversed, reflects the time-reversal symmetry of reactive trajectories: under time reversal, velocities change sign, so a trajectory reactive from $A$ to $B$ with velocity $v$ corresponds to a trajectory from $B$ to $A$ with velocity $-v$.
\end{remark}

\paragraph{Non-Markovian Case: Position-Only Observation.}

The framework applies equally to non-Markovian processes. As an example, consider the underdamped Langevin dynamics \eqref{eq:underdamped_x}--\eqref{eq:underdamped_v}, but now take $z_t = x_t$ (position only), marginalizing over the velocity. The reactive density becomes
\begin{equation}
    \rho_R(x) = C_R^{-1} e^{-\beta U(x)} \int e^{-\frac{\beta}{2} v^\top m v} q(x, v) (1 - q(x, -v)) \, dv,
    \label{eq:rho_nonmarkov}
\end{equation}
and the current velocity is
\begin{equation}
    u(x) = \frac{\int v \, e^{-\frac{\beta}{2} v^\top m v} q(x, v) (1 - q(x, -v)) \, dv}{\int e^{-\frac{\beta}{2} v^\top m v} q(x, v) (1 - q(x, -v)) \, dv}.
    \label{eq:u_nonmarkov}
\end{equation}

\begin{remark}
Equations \eqref{eq:rho_nonmarkov}--\eqref{eq:u_nonmarkov} are exact expressions for the reactive density and current velocity of the non-Markovian process $z_t = x_t$. No approximation is involved; the velocity marginalization is built into Definition~\ref{def:rdc}.
\end{remark}

\section{Proof of Theorem~\ref{thm:projected_current}}
\label{app:proof}

\begin{proof}
Reactive trajectories in the full space evolve according to the Doob-transformed dynamics \eqref{eq:reactive_sde}:
\begin{equation}
\begin{aligned}
    dx_t^R &= b(x_t^R) \, dt + 2D(x_t^R)\nabla\log q(x_t^R) \, dt+ \sqrt{2}\,\sigma(x_t^R) \, dW_t.
\end{aligned}
\end{equation}
Applying Itô's lemma to $z_t = \phi(x_t^R)$, we obtain
\begin{equation}
    dz_t = \tilde{b}(x_t^R) \, dt + \sqrt{2}\,\nabla\phi(x_t^R)^\top \sigma(x_t^R) \, dW_t,
\end{equation}
where 
\begin{equation}
\begin{aligned}
    \tilde{b}(x) &= \left( b(x) + 2D(x)\nabla\log q(x) \right) \cdot \nabla\phi(x) + D(x) : \nabla\nabla\phi(x).
\end{aligned}
\end{equation}

To identify the current velocity, we use Definition~\ref{def:rdc} with a test function $\psi: \R^n \to \R^n$. The Stratonovich integral satisfies
\begin{equation}
    \int_0^\tau \psi(z_t) \circ dz_t = \int_0^\tau \psi(z_t) \cdot dz_t + \frac{1}{2} \int_0^\tau d\langle \psi(z), z \rangle_t,
\end{equation}
where the first term on the right is an Itô integral. The quadratic covariation is
\begin{equation}
    d\langle \psi(z), z \rangle_t = 2 \nabla\psi(z_t) : \nabla\phi(x_t^R)^\top D(x_t^R) \nabla\phi(x_t^R) \, dt.
\end{equation}
Substituting the SDE for $dz_t$ into the Itô integral:
\begin{equation}
\begin{aligned}
    \int_0^\tau \psi(z_t) \cdot dz_t &= \int_0^\tau \psi(z_t) \cdot \tilde{b}(x_t^R) \, dt + \sqrt{2} \int_0^\tau \psi(z_t) \cdot \nabla\phi(x_t^R)^\top \sigma(x_t^R) \, dW_t.
\end{aligned}
\end{equation}
The stochastic integral is a martingale, so its expectation vanishes. Therefore,
\begin{equation}
\begin{aligned}
    \E\left[ \int_0^\tau \psi(z_t) \circ dz_t \right] &= \E\left[ \int_0^\tau \psi(z_t) \cdot \tilde{b}(x_t^R) \, dt \right] + \E\left[ \int_0^\tau \nabla\psi(z_t) : \nabla\phi(x_t^R)^\top D(x_t^R) \nabla\phi(x_t^R) \, dt \right].
\end{aligned}
\end{equation}
By Definition~\ref{def:rdc}, this equals
\begin{equation}
    T_R \int_{\R^n} \left[ \psi(z) \cdot u(z) + \nabla\psi(z) : D_z(z) \right] \rho_R(z) \, dz,
\end{equation}
where $D_z(z) = \E\left[ \nabla\phi(x)^\top D(x) \nabla\phi(x) \,|\, \phi(x) = z \right]$ is the projected diffusion tensor. Comparing with Definition~\ref{def:rdc}, the term involving $\nabla\psi$ cancels between the Stratonovich correction and the projected diffusion, and we identify
\begin{equation}
\begin{aligned}
    u(z) &= \E\big[ \left(b(x) + 2D(x)\nabla\log q(x) \right) \cdot \nabla\phi(x) + D(x) : \nabla\nabla\phi(x) \,\big|\, \phi(x) = z \big],
\end{aligned}
\end{equation}
where the expectation is over $\rho_R(x)$ conditional on $\phi(x) = z$.
\end{proof}

\section{Test system}\label{app: test systems}

\subsection{Müller-Brown}\label{app: Muller-Brown}

The Müller-Brown system follows the energy function given by:

\begin{equation}
    \begin{aligned}
    U(x, y) = & -200 \cdot \exp \left( -(x-1)^2 -10 y^2 \right) \\
              & -100 \cdot \exp \left( -x^2 - 10 \cdot (y - 0.5)^2 \right) \\
              & -170 \cdot \exp \left( -6.5 \cdot (0.5 + x)^2 + 11 \cdot (x +0.5) \cdot (y -1.5) -6.5 \cdot (y -1.5)^2 \right) \\
              & + 15 \cdot \exp \left( 0.7 \cdot (1 + x)^2 +0.6 \cdot (x + 1) \cdot (y -1) +0.7 \cdot (y -1)^2 \right) \, .
    \end{aligned}
\end{equation}

The basins A and B are circles of radius 0.1 centered at [-1.5, 0.9] and [-0.5, 1.7] respectively

Overdamped Langevin dynamics was simulated using an Euler-Maruyama integrator, while underdamped dynamics was simulated using the BAOAB integrator. Both dynamics used a time step of $1e^-4$ and $\beta^{-1} = 12.5$. The underdamped dynamics also used a friction coefficient of $\gamma = 10$. 

TPS shooting moves were implemented in a straightforward manner. Shooting points were selected uniformly along each trajectory, and the shooting direction was randomly chosen. If the backward direction was chosen, velocities were reversed. If a reactive trajectory was generated from the shooting point, it was accepted or rejected based on the Metropolis criterion, which is given by the ratio of the new and old transition path lengths. A total of 1,000 transition paths were generated for both overdamped and underdamped dynamics

\subsection{Alanine Dipeptide}\label{app: ADP}

Transition paths for Alanine Dipeptide were generated using the OpenPathSampling Python package with the OpenMM\cite{eastman2023openmm} engine and the LangevinMiddleIntegrator. The system was simulated with the CHARMM27 force field at a time step of 1 fs, friction coefficient of 1/ps and hydrogen bond restaints. Molecular states are stored every 10 fs of simulation. Basins A and B follow the same definitions as in \cite{mitchell2024committor} with squares of side $10^\circ$ centered at $[\varphi,\psi]=[-150^\circ,170^\circ]$ and $[90^\circ,50^\circ]$, respectively. A total of 100,000 transition paths were generated for this system. We conduct a dataset ablation with dataset sizes of both 10,000 and 100,000 trajectories.

\subsection{AIB9}

Similar to ADP, the reactive path ensemble for AIB9 is also generated using the OpenPathSampling Python package with the OpenMM engine and the LangevinMiddleIntegrator. The system is simulated with the Amber15-IPQ force field\cite{bogetti2020twist} in vaccuum at 500K with time steps of 1fs, friction coefficients of 1/ps and Hbond constraints. Basins A and B are defined as squares of side $10^\circ$ centered at $[\varphi,\psi]=[-60^\circ,-40^\circ]$ and $[60^\circ,40^\circ]$, respectively. Simulation frames are saved every 100fs. We generate a dataset of 100,000 paths from AIB9 as well.

\section{Experimental Details}\label{app: exp detail}

\subsection{Model Architecture}\label{app: model archs}

\paragraph{Muller Brown:} Current velocity model for the Muller Brown is a simple MLP model with 5 layers, hidden dimension of 128 and SiLU activation functions

\paragraph{ADP $\mathbb{R}^{3N}$: } Alanine Dipeptide is featurized through a simple atom typing scheme where all the atom types are made distinguishable. The velocity model is parameterized as a non-equivariant transformer. Therefore the node features also contain an embedding of the (mean-centered) position.

The attention coefficients $A$ are calculated as

\begin{equation}
	\label{graphformer_head}
	A
	= \sigma \!\Bigl(\frac{Q K^{\mathsf{T}}}{\sqrt{d}} + B\Bigr)
\end{equation}

where $B$ is a learnable bias computed from pairs of atoms. This pairwise attention bias takes as input the atom types and the interatomic distance. The overall architecture is heavily inspired by the Graphormer architecture \cite{shi2022benchmarking}. The transformer consists of 4 self-attention layers, each with 16 attention heads. The hidden dimension is 512, and layer normalization is applied both before and after the attention blocks. The final potential prediction is produced by an MLP applied to a sum of the hidden node features.

\paragraph{ADP and AIB9 Dihedrals.}
The dihedral angles for ADP and AIB9 are bounded in $[-\pi, \pi)$ and exhibit periodic structure that must be respected by the input representation. To handle this, we lift each dihedral $x$ into a continuous, differentiable embedding by concatenating $[\sin(x),\, \sin(x + \pi/4)]$. This featurization removes the discontinuity at the $\pm\pi$ boundary and ensures non-vanishing gradients across the full angular domain, since the two sinusoids are phase-shifted so that their derivatives never vanish simultaneously. Both dihedral models are 4-layer MLPs with SiLU activations.

\subsection{Training Details}\label{app:training_details}

\paragraph{Training data and time-step convention.}
Across all systems, the current-velocity models are trained on triplets $(x_{t-\Delta t},\, x_t,\, x_{t+\Delta t})$ sampled from transition-path trajectories. The training time step $\Delta t$ is chosen such that the central finite-difference velocity estimate
\[
\frac{x_{t+\Delta t} - x_{t-\Delta t}}{2\Delta t} \;\sim\; \mathcal{O}(1)
\]
is well-scaled for optimization, avoiding the vanishing- and exploding-target regimes that arise for very small or very large $\Delta t$. All models are optimized with AdamW. At the end of each epoch, we sample 100 initial points from the reactive ensemble and integrate the learned velocity field both forward and backward in time to generate full trajectories (\cref{app:inference_details}); the checkpoint achieving the highest fraction of trajectories that successfully connect the two endpoint states is retained for inference. Below, we report only the system-specific hyperparameters that deviate from this shared protocol.

\paragraph{Müller--Brown.}
We use $\Delta t = 10^{-4}$ and train for 25 epochs at a constant learning rate of $10^{-3}$.

\paragraph{ADP, Cartesian coordinates ($\mathbb{R}^{3N}$).}
Triplets are drawn from saved trajectories with consecutive configurations separated by $10$~fs (every other frame in the saved trajectories), and we set $\Delta t = 10^{-3}$ to satisfy the scaling condition above. Training runs for 400 epochs, with a linear learning-rate warmup to $5\times 10^{-4}$ over 5{,}000 steps followed by cosine annealing to $10^{-5}$ over 100{,}000 steps.

\paragraph{ADP, dihedral coordinates.}
Configurations are again separated by $10$~fs, and we find that $\Delta t = 10^{-1}$ satisfies the scaling condition. Training runs for 1{,}000 epochs at a constant learning rate of $5\times 10^{-4}$, with a \texttt{ReduceLROnPlateau} scheduler that decays the learning rate by a factor of 0.1 whenever the validation loss has not improved for 30 consecutive epochs.

\paragraph{AIB9, backbone dihedrals.}
Configurations are separated by $100$~fs; all remaining hyperparameters match the ADP dihedral setup above.

\subsection{Generation of Flow Lines}\label{app:inference_details}
To generate a flow line, we sample an initial point $x_0$ from the reactive ensemble, $x_0 \sim \rho_{\mathcal{R}}$, and integrate the learned velocity field independently in the forward and backward directions:
\begin{equation}
x^{+}_{s} = x_0 + \int_0^{s} u(x^{+}_{r})\,dr, \quad s \in [0, \tau_+],
\qquad
x^{-}_{s} = x_0 - \int_0^{s} u(x^{-}_{r})\,dr, \quad s \in [0, \tau_-],
\end{equation}
where the forward and backward stopping times are
\begin{equation}
\tau_+ := \min\bigl\{ s \ge 0 \;\big|\; x^{+}_{s} \in \partial B \bigr\} \wedge T_{\max},
\qquad
\tau_- := \min\bigl\{ s \ge 0 \;\big|\; x^{-}_{s} \in \partial A \bigr\} \wedge T_{\max}.
\end{equation}
The forward integration terminates when the trajectory first enters the product boundary $\partial B$, and the backward integration terminates when it first enters the reactant boundary $\partial A$; in either case, integration also halts if the maximum time $T_{\max}$ is exceeded. The two segments are concatenated to form a complete flow line connecting $\partial A$ to $\partial B$ through $x_0$. The boundary shells $\partial A$ and $\partial B$ used to terminate inference-time integration are taken larger than those used to define the training data, since the learned velocity is not well-defined at the boundaries of $A$ and $B$.

\paragraph{Müller--Brown.}
Integration uses an explicit Euler scheme with step size $\Delta t = 10^{-4}$.

\paragraph{ADP, Cartesian coordinates ($\mathbb{R}^{3N}$).}
Integration uses a fourth-order Runge--Kutta solver with $\Delta t = 10^{-4}$. After each step, a post-processing routine enforces the same hydrogen-bond constraints applied during MD data generation, ensuring consistency with the constrained dynamics and improving sample quality in practice.

\paragraph{ADP and AIB9, dihedral coordinates.}
Integration uses a fourth-order Runge--Kutta solver with $\Delta t = 10^{-2}$. After each step, dihedral angles are wrapped back into $[-\pi, \pi)$ to preserve the periodic structure of the coordinate space.

\subsection{Metrics}\label{sec:metrics}

We evaluate the quality of generated flow lines along two complementary axes: whether trajectories successfully traverse the transition, and how closely the resulting ensemble of configurations matches the reference distribution of reactive paths.

\paragraph{Completion rate.}
A generated flow line is considered \emph{complete} if it connects the reactant and product states within the maximum integration time $T_{\max}$, i.e., if both stopping times $\tau_+$ and $\tau_-$ are attained before reaching $T_{\max}$. The completion rate is the fraction of sampled initial points $x_0 \sim \rho_{\mathcal{R}}$ for which this condition is met.

\paragraph{Torsional Wasserstein-2 distance.}
To assess the distributional fidelity of completed flow lines, we compute the Wasserstein-2 ($\mathcal{W}_2$) distance between the empirical distribution of generated configurations $\mu$ and a reference distribution $\nu$ drawn from MD reactive trajectories, with both distributions represented through backbone torsion angles. Because torsions live on the periodic domain $[-\pi, \pi)$, we use the circular ground distance
\[
d_{\mathrm{circ}}(\theta, \theta') := \min\bigl(\, |\theta - \theta'|,\; 2\pi - |\theta - \theta'| \,\bigr),
\]
extended coordinate-wise to the product of circles $\mathbb{T}^d$, and define the torsional $\mathcal{W}_2$ as
\[
\mathcal{W}_2^{\mathrm{tors}}(\mu, \nu) := \left( \inf_{\gamma \in \Gamma(\mu, \nu)} \int d_{\mathrm{circ}}(\theta, \theta')^2 \, d\gamma(\theta, \theta') \right)^{1/2},
\]
where $\Gamma(\mu, \nu)$ denotes the set of couplings between $\mu$ and $\nu$. The infimum is computed exactly by solving the discrete optimal-transport linear program with the POT library~\citep{flamary2021pot}. Lower $\mathcal{W}_2^{\mathrm{tors}}$ values indicate that the model reproduces the geometry of the reactive ensemble more faithfully.

\section{Additional Results}

\subsection{Example Transition path structures}
\label{sec: structures}

\begin{figure}[!h]
    \centering
	\includegraphics[width=0.5\linewidth]{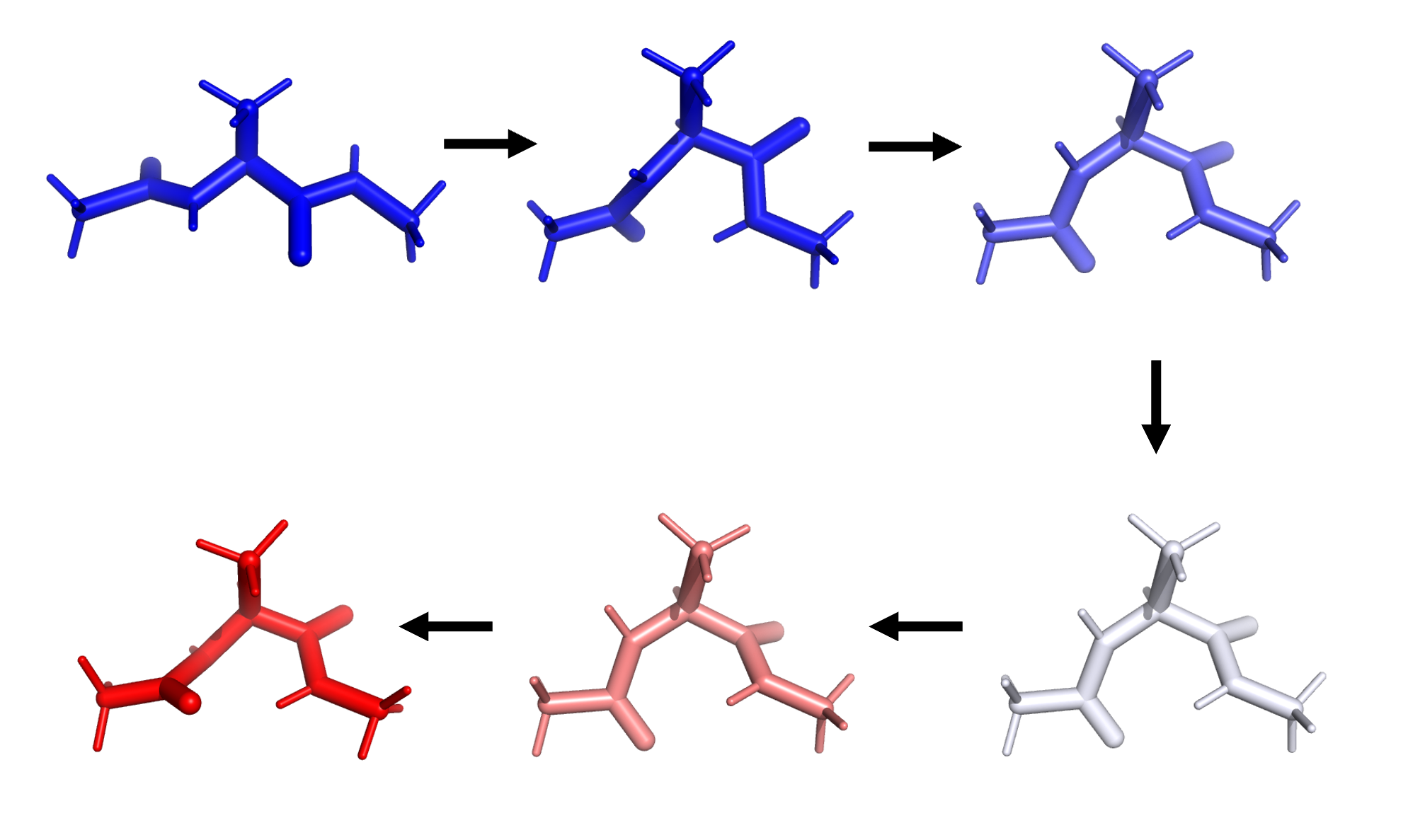}
    \caption{\textbf{ADP flow trajectory.} Snapshots of generated trajectory colored by the value of $h$, from dark blue at $A$ and red at $B$. The transition state is correctly identified as the white snapshot.}
\end{figure}

\begin{figure}[!h]
    \centering
	\includegraphics[width=\linewidth]{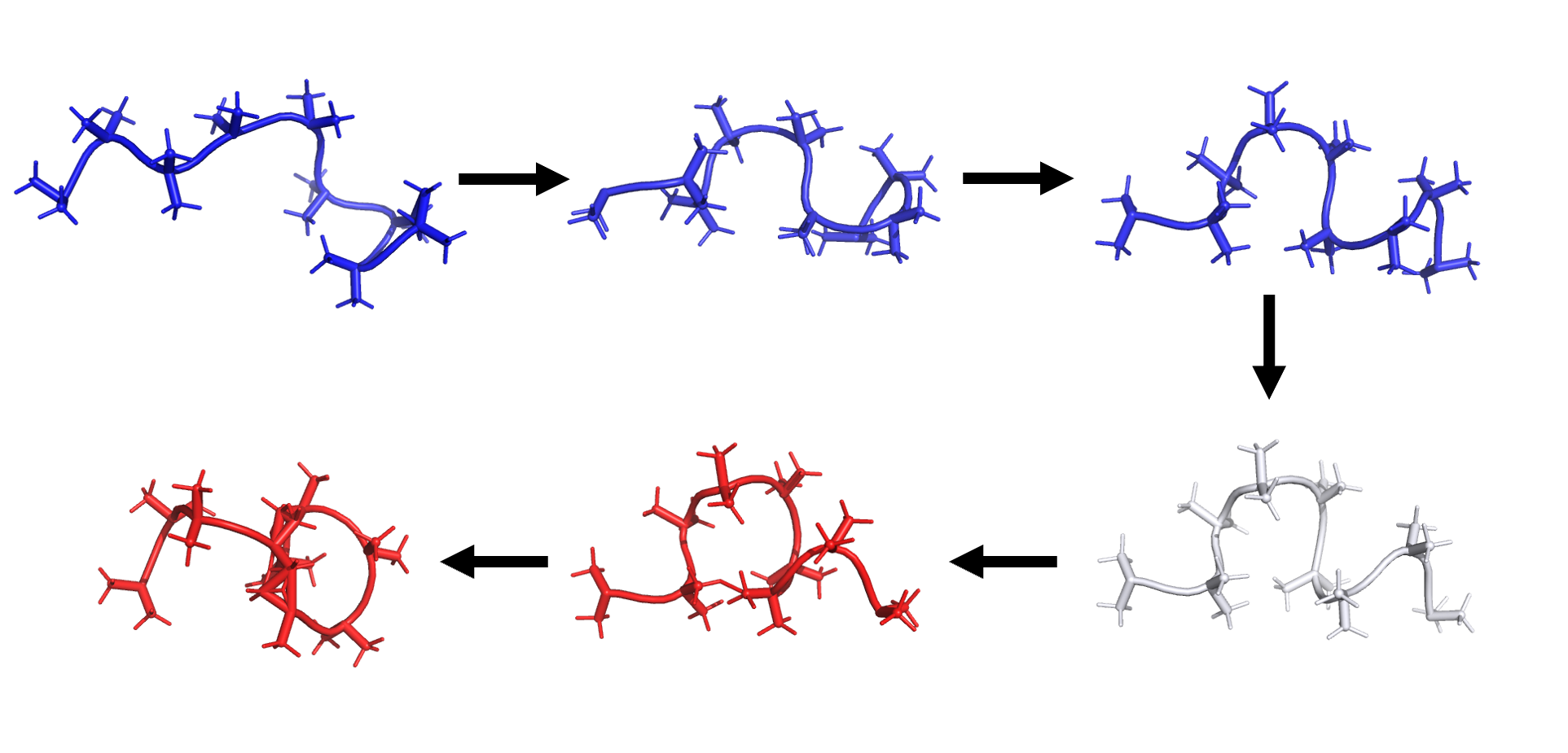}
    \caption{\textbf{AIB9 flow trajectory.} Snapshots of generated trajectory colored by the value of $\phi$. The start ($A$) and end ($B$) states can be visualized as dark blue and red and the transition state, identified correctly, is white.}
\end{figure}

\subsection{Rate Constant Estimation for the ADP $C_7^{\mathrm{eq}} \to C_7^{\mathrm{ax}}$ Transition}
\label{app:adp_rate}

\paragraph{Weighted ensemble setup.}

To assess the utility of the learned potential $h$ as a CV, we compute the rate constant for the $C_7^{\mathrm{eq}} \to C_7^{\mathrm{ax}}$ transition in alanine dipeptide (ADP) using the weighted ensemble (WE) method~\citep{huber1996weighted, zuckerman2017weighted}, as implemented in the WESTPA~2.0 package~\citep{russo2022westpa}, and compare against a CV constructed directly from the backbone dihedrals.

Weighted ensemble partitions the chosen CV space into bins and maintains a target number of trajectory replicas per bin via splitting and merging, while preserving unbiased trajectory weights. The flux of weight into the product state $C_7^{\mathrm{ax}}$ provides an unbiased estimator of the forward rate constant. To avoid hand-tuning bin boundaries for each CV, we use the minimal adaptive binning (MAB) scheme~\citep{torrillo2021minimal}, which dynamically places bins along the CV based on the leading and trailing replicas at each iteration and concentrates resolution near the progressing front of the ensemble. We initialize all replicas in $C_7^{\mathrm{eq}}$ and run WE until the running rate estimate stabilizes.

\paragraph{Collective variables.}
We compare two choices of CV for MAB binning:
\begin{itemize}
    \item \textbf{Backbone-dihedral CV.} Following standard practice for ADP, we define a scalar progress coordinate as the maximum angular displacement along the $(\phi, \psi)$ backbone dihedrals from the reactant minimum:
    \[
    \xi_\text{dih}(x) := \max\bigl(\, d_\phi(\phi(x), \phi_{\mathrm{ax}}),\; d_\psi(\psi(x), \psi_{\mathrm{ax}}) \,\bigr),
    \]
    where $d_\phi, d_\psi$ denote angular distances on the circle and $(\phi_{\mathrm{ax}}, \psi_{\mathrm{ax}})$ is the $C_7^{\mathrm{ax}}$ minimum. This CV is a natural baseline since $(\phi, \psi)$ are the canonical slow degrees of freedom for ADP.
    \item \textbf{Learned-potential CV.} We use $h_\theta(x)$ directly as a scalar CV, with MAB operating between the values of $h$ at the two endpoint states.
\end{itemize}

For both CVs, we use 40 MAB bins with a target of 10 walkers per bin and a resampling interval of 100~fs.

\begin{figure}[t]
    \centering
    \includegraphics[width=0.5\linewidth]{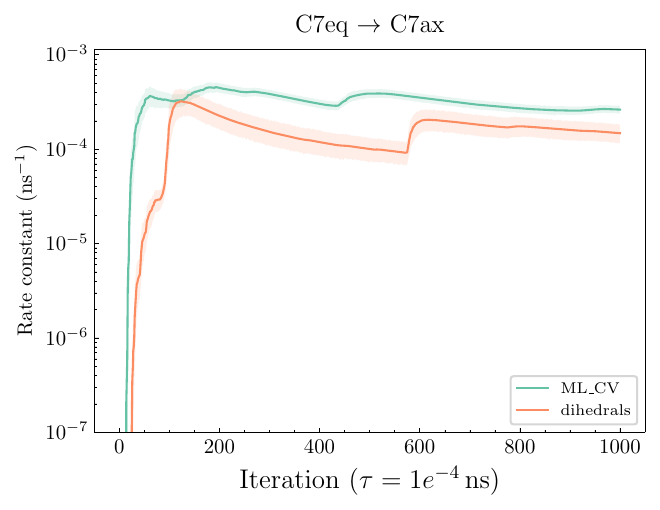}
    \caption{\textbf{Rate constant estimation for the $C_7^{\mathrm{eq}} \to C_7^{\mathrm{ax}}$ transition in ADP.} We compare WE rate estimates obtained using the learned potential $h$ as the collective variable against those obtained using the backbone-dihedral CV $\xi_\text{dih}$. Both runs use the WESTPA~2.0 implementation with minimal adaptive binning. Using $h$ yields earlier first-passage events, faster convergence of the running rate estimate, and tighter confidence intervals, indicating that $h$ provides a more informative collective variable for driving the transition.}
    \label{fig:ADP_rate}
\end{figure}

\paragraph{Results.}
Figure~\ref{fig:ADP_rate} compares the running WE rate estimates obtained with the two CVs. Using $h$ as the binning coordinate produces earlier first-passage events, faster convergence of the running rate estimate, and tighter confidence intervals than the backbone-dihedral CV. We attribute this improvement to two properties of $h$: it is a globally smooth, scalar-valued progress coordinate that aligns with the dominant transition direction, whereas the dihedral-based CV reduces a two-dimensional periodic space to a single scalar via a $\max$ operation that can cause replicas with different mechanistic character to fall into the same bin. Together, these results suggest that $h$ serves as a more informative CV for driving the $C_7^{\mathrm{eq}} \to C_7^{\mathrm{ax}}$ transition in WE simulations.

\subsection{Dataset size ablation}

The difference in performance on training on 100K vs 10K trajectories is reported in the table below:

\begin{table}[tbh]
	\caption{\textbf{Quantitative results across systems.} Values are computed over 256 trajectories. In both systems, the generated trajectories closely reproduce the values observed in the reactive trajectory ensemble.}
	\label{tab:datablation_results}
	\centering
	\begin{tabular}{lcccccc}
		\toprule
		System & Method & No. of trajectories &Features & Completion rate & $\mathbb{T}$-$W_2$\\
		\midrule
		ADP & TPS & 100K &$\mathbb{R}^{3N}$ & 1 & $0.0875 \pm 0.006$\\
         ADP & FM & 10K&$\mathbb{R}^{3N}$ & 0.6250 & $0.3683 \pm 0.009$\\
        ADP & FM & 100K&$\mathbb{R}^{3N}$ & 0.7734 & $0.6453 \pm 0.016$\\
       ADP & FM & 10K&Dihedrals & 0.9531 & $0.8573 \pm 0.005$\\
        ADP & FM & 100K&Dihedrals & 0.9804 & $0.7220 \pm 0.017$\\
        
        \midrule
        AIB9 & TPS & 100K&$\mathbb{R}^{3N}$ & 1 & $0.1011 \pm 0.034$ \\
        AIB9 & FM & 10K&Backbone Dihedrals & 0.7343 & $1.096 \pm 0.014$\\
        AIB9 & FM & 100K&Backbone Dihedrals & 0.7578 & $0.9137 \pm 0.007$\\
        
		\bottomrule
	\end{tabular}
\end{table}



\end{document}